\documentclass[11pt,a4paper]{article}

\usepackage[T1]{fontenc}
\usepackage{graphicx}
\usepackage{mathtools}
\usepackage{amssymb}
\usepackage{amsthm}
\usepackage{dsfont}
\usepackage{amsmath}
\usepackage{thmtools}
\usepackage{xcolor}
\usepackage{nameref}
\usepackage{babel}
\usepackage[authoryear,round]{natbib}
\usepackage{ifthen}
\usepackage{pgfplots}
\pgfplotsset{compat=1.18}
\usepackage{subcaption}

\definecolor{rgbblue}{RGB}{68,119,170}
\definecolor{rgbgreen}{RGB}{34,136,51}
\definecolor{rgbviolet}{RGB}{170,51,119}

\usepackage[
    colorlinks=true,
    linkcolor=blue,
    citecolor=blue,
    urlcolor=blue
]{hyperref}
\usepackage{cleveref}

\usepackage{indentfirst}
\newboolean{toggleappendix}

\newcommand{\storeproof}[2]{
  \ifthenelse{\boolean{toggleappendix}}%
    {\expandafter\gdef\csname proof-#1\endcsname{#2}}%
    {\begin{proof}#2\end{proof}}
}

\newcommand{\showproof}[1]{%
  \ifthenelse{\boolean{toggleappendix}}%
    {\begin{proof}\csname proof-#1\endcsname\end{proof}}%
    {}
}

\usepackage[left=2.5cm, right=2.5cm, top=2.5cm, bottom=2.5cm]{geometry}

\newtheorem{theorem}{Theorem}
\numberwithin{theorem}{section}

\newtheorem{corollary}[theorem]{Corollary}
\newtheorem{proposition}[theorem]{Proposition}

\newtheorem{definition}[theorem]{Definition}
\theoremstyle{remark}

\newtheorem{assumption}[theorem]{Assumption}

\allowdisplaybreaks

\newcommand{\E}{\mathbb E}

\newcommand{\M}{\mathcal M}
\newcommand{\U}{\mathcal U}
\newcommand{\Yres}{\mathcal Y}
\newcommand{\Ptwo}{\mathcal P_2}
\newcommand{\Risk}{\mathcal R}

\newcommand{\dd}{\,\mathrm d}

\DeclareMathOperator{\Law}{Law}
\DeclareMathOperator{\Tr}{Tr}

\DeclareMathOperator*{\argmin}{arg\,min}
\DeclarePairedDelimiter{\norm}{\lVert}{\rVert}
\DeclarePairedDelimiter{\ip}{\langle}{\rangle}
\usepackage{xcolor}

\allowdisplaybreaks[2]

\title{When do explicit penalties improve flow matching?}

\author{Lea Kunkel, Alberto González-Sanz and Sophie Langer}
\date{Oktober 2026}

\begin{document}
\maketitle

\begin{abstract}
Flow matching can learn structure directly from data, raising the question of when additional constraints improve generation. We study penalties that encode such constraints, with \emph{physics-informed flow matching} as a motivating example. Satisfying a constraint more closely need not make the generated distribution more accurate. We therefore compare penalised and unpenalised generative modelling through their expected squared $2$-Wasserstein error. Our analysis covers both vector-valued and functional data in a Hilbert-space setting. We first derive geometric criteria that account for constraint violations and the distribution of samples after projection onto the constraint set. We then solve the statistical comparison explicitly for specific velocity-field classes and constraints via flow matching. In this model, penalisation improves accuracy exactly when the reduction in estimation variance outweighs the increase in squared bias. The resulting formula shows how this trade-off depends on sample size, the model class, the noise schedule, and the penalty weight.
\end{abstract}

\section{Introduction}\label{sec:intro}
In scientific applications, obtaining data can require costly numerical simulations. In astrophysics, for example, the cost of high-fidelity cosmological simulations can limit the available training data \citep{zeghal2025bridging}. Generative models offer a promising approach by approximating the distribution of simulation outputs and provide a cheaper way to produce additional samples. Flow matching does this by learning a velocity field that transports a simple reference distribution towards the data distribution \citep{lipman2023flow,liu2023flow, albergo2022}. It has shown strong performance in tasks ranging from high-resolution image synthesis \citep{esser2024scaling} to speech generation \citep{chen2024f5tts}. With limited training data, however, the learned distribution may still differ substantially from the target.

Additional knowledge about the data, such as physical laws, algebraic relations between variables, or a known underlying subspace, can be incorporated through penalties on constraint violations. In \emph{physics-informed flow matching}, physical residuals or conservation errors enter the training objective \citep{baldan2026physics,debeire2026physics}. Numerical experiments with related physics-informed diffusion models also suggest that such penalties can reduce over-fitting \citep{bastek2025physics}.

The statistical benefit of such penalties is not guaranteed. In the edge case of a hard constraint, the admissible samples are restricted, but the constraint generally does not determine how probability is distributed among them. Satisfying a constraint more closely can therefore come at the cost of accurately reproducing the distribution within the constraint set.  For soft constraints, where violations are weighted against improvements elsewhere, a similar but more subtle setting occurs. Intrinsic adaption mechanisms compete with the external influences of the penalty. This raises the question we aim to answer: \emph{when does adding a constraint penalty improve the distribution estimation ability of flow matching?}

\paragraph{Problem formulation.}

We consider independent observations $Y_1,\ldots,Y_n$ from an unknown probability distribution $\nu$ on a separable Hilbert space $\mathcal H$. A known continuous residual map $R\colon\mathcal H\to\Yres$, with $\Yres$ another Hilbert space, defines the constraint set $\mathcal M=\{y:R(y)=0\}$, and we assume $\nu(\mathcal M)=1$. We fit a velocity field in a class $\mathcal V$ via the penalised flow-matching objective:
\begin{equation}\label{eq:intro-estimator}
 \widehat u_{n,\lambda}\in\argmin_{u\in\mathcal V}
 \Bigl\{\underbrace{F_n(u)}_{\text{Flow matching loss}}+\lambda \underbrace{\mathbb{E}_{X \sim P^u}\bigl[\|R(X)\|^2\bigr]}_{\text{Constraint penalty}}\Bigr\},\qquad \lambda\geq0.
\end{equation}
Here $P^u$ is the distribution resulting from using the vector field $u$ in the flow matching dynamics. The precise definitions are given in \Cref{sec:fm}, the concrete objective is defined in \eqref{eq:estimator}. Setting $\lambda=0$ gives the \emph{unpenalised flow-matching estimator}. 
 We measure accuracy by
\begin{equation}\label{eq:intro-risk}
 \Risk_n(\lambda)
 =\E_{ Y^{(n)}}\big[W_2^2(P^{\widehat u_{n,\lambda}},\nu)\big],
\end{equation}
where $Y^{(n)}=(Y_1,\ldots,Y_n)$ and $W_2$ is the $2$-Wasserstein distance, defined in \Cref{sec:geometry}. The expectation averages over the training sample. We ask when $\Risk_n(\lambda)<\Risk_n(0)$ for $\lambda>0$, comparing the estimators with the same observations, velocity-field class, training and sampling setup.

\paragraph{Our contributions.}
Building on functional flow matching (\cite{pmlr-v238-kerrigan24a, zhang2026flow}), we analyse when constraint penalties improve generation in separable Hilbert spaces. This setting accommodates both vector-valued and functional observations. We develop the comparison from two complementary perspectives.
\begin{enumerate}
\item \textbf{Geometric criteria for improved generation.} We separate distributional error into the distance of generated samples from the constraint set and the discrepancy between their projected distribution and the target. For affine constraint sets, this decomposition is exact (\Cref{cor:flat}). It shows that moving samples closer to the constraint set improves generation only when this gain outweighs any deterioration in the projected distribution (\Cref{cor:geocomparison}). For curved sets, we control the additional remainder under assumptions on curvature and proximity to the set (\Cref{ass:tubular} and \Cref{thm:geometric}). These criteria apply independently of how the distributions were learned (\Cref{cor:geocomparison}). 
\item \textbf{Exact statistical comparison in an affine model.}
For affine residuals and a restricted affine velocity-field class, we derive the penalised empirical minimiser and its exact change in expected $2$-Wasserstein risk. The generated distribution has a fixed shape and a learned location. Within this model, penalisation improves accuracy precisely when the reduction in estimation variance exceeds the increase in squared bias (\Cref{thm:linear_linear_exact}). The comparison makes the dependence on sample size, the hypothesis class, the noise schedule, and the penalty weight explicit. In particular, for a fixed penalty and noise schedule, any strictly positive increase in squared bias eventually outweighs the variance benefit as the sample size increases. \Cref{cor:linear_comparison} extends the analysis to arbitrary residuals. The affine model connects the two perspectives: the geometric and statistical comparisons yield the same change in distributional error (\Cref{prop:connection}).
\item \textbf{Extension to fixed feature models.} As a first step towards network based velocity field estimators, we extent our analysis to fixed feature models. \Cref{thm:last_layer_improvement} shows that penalisation always improves distribution estimation for sufficiently small $\lambda> 0$. 
\end{enumerate}

\paragraph{Related literature.} 
Several recent works establish statistical guarantees or closely related results for unpenalised flow matching, corresponding to $\lambda=0$ in \eqref{eq:intro-estimator}. \cite{gao2024} establish a suboptimal convergence rate of the distributional estimator with respect to the $2$-Wasserstein distance. \cite{benton2024error} provide an analysis of flow matching excluding the approximation error of the velocity field. \cite{gong2025} investigate approximation-theoretic properties of ReLU neural networks for representing vector fields associated with higher-order trajectories. \cite{fukumizu2024, zhou2025error} partly improve previous results, but their proofs exhibit difficulties with the Lipschitz constant of the intrinsic vector field, which grows with the sample size and directly affects the resulting convergence rate. 
\cite{stephanovitch2026lipschitz,kunkel2025lipschitz} investigate this specific pitfall in flow matching analysis. \cite{kunkel2025minimax} obtain minimax convergence rates in an over-parametrised setting. \cite{mena2025statistical} study further properties, including asymptotic behaviour and the connection to optimal transport maps.

For intrinsically low-dimensional target distributions, \citet{chakraborty2026generalization} obtain high-probability bounds governed by the Wasserstein dimension, using stochastic sampling after flow-matching training. Together with \cite{kumar2026manifold}, these results illustrate how flow-matching estimators can exploit low-dimensional data structure without an explicit constraint penalty.

Known structure is incorporated explicitly in physics-informed neural networks (PINNs), which penalise violations of governing equations during training \citep{raissi2019physics}. \citet{mishra2023estimates} use PDE stability to bound solution error in terms of training residuals and quadrature error. \citet{doumeche2025convergence} establish consistency of PINNs under additional ridge regularisation and strong convergence for linear PDE systems with an additional Sobolev penalty. More directly related to our statistical question, \citet{doumeche2024kernel} study regression with a penalty based on a linear differential operator and show that physical information can improve convergence rates in specific settings. These results concern the estimation of a solution or regression function. In generative modelling, the additional task is to recover how probability is distributed among admissible samples.

Alternative approaches to incorporate constraint penalties in flow matching include fine-tuning with weak-form PDE residuals \citep{tauberschmidt2026physics} or corrections during sampling \citep{utkarsh2025physics} (see \Cref{sec:discussion} for more details). As mentioned above, our analysis concerns penalties added to the flow matching training objective.

\paragraph{Outline.}
\Cref{sec:fm} introduces flow matching for Hilbert-valued data, the end-point penalty and the corresponding empirical risk minimiser. \Cref{sec:geometry} characterises improvement through the penalty based on geometric criteria. \Cref{sec:estimator_based_analysis} and \Cref{sec:last_layer} investigate the impact of penalisation for affine and fixed feature models, respectively. The proofs of all results are deferred to the appendix.

\paragraph{Notation.}
All Hilbert spaces are real and separable. For a Hilbert space $\mathcal{H}$, we write $\langle \cdot , \cdot\rangle_\mathcal{H}$ and $\norm{\cdot}_\mathcal{H}$ for its inner product and norm,  respectively. When clear from the context, we abbreviate it to $\ip{\cdot,\cdot}$ and $\norm{\cdot}$. Their given inner products determine adjoints, covariance operators, and the Wasserstein cost. For a bounded linear operator $A$ between two Hilbert spaces, we denote the adjoint operator by $A^*$. Further we write $\operatorname{ran}(A)=\{A\theta:\theta\in\mathcal E\}$ for its range
and $\overline{\operatorname{ran}(A)}$ for its closure in the norm
of $\mathcal H$. The corresponding squared
Hilbert--Schmidt norm is
$
\|A\|^2_{\mathrm{HS}}
\coloneqq
\sum_j\|Ae_j\|_{\mathcal{H}}^{2},
$
where $(e_j)$ is an orthonormal basis of $\mathcal H$.
We denote by $\Ptwo(\mathcal H)$ the space of tight probability measures on $\mathcal H$ with finite second moments and by $\Law(X)$ the law of a random variable $X$. For a Borel measurable $T$ and a probability measure  $\rho$, we denote by  $T_\#\rho$ the push-forward measure of $\rho$ by $T$, defined by $T_\#\rho=\Law(T(Y)) $  for $Y\sim \rho$. 
For $a,b\in\mathcal H$, $(a\otimes b)h=\ip{h,b}a$. Expectations of Hilbert-valued random variables are Bochner expectations. For $X\sim\rho\in\Ptwo(\mathcal H)$, we write
\[
 m_\rho=\E X,\qquad
 \Sigma_\rho=\E[(X-m_\rho)\otimes(X-m_\rho)].
\]
All consequential properties needed for this work can be found in \citet[Section 7.2]{Eubank2015-lg}. For a Borel family of probability measures $(\mu_t)_{t\in[0,1]}$ and a finite measure $\eta$ on $[0,1]$, $L^2(\dd\eta(t)\mu_t;\mathcal H)$ denotes the space of jointly Borel fields $u:[0,1]\times\mathcal H\to\mathcal H$ satisfying
\[
 \int_0^1\int_{\mathcal H}\norm{u_t(x)}^2 \dd  \mu_t(x)\dd \eta( t)<\infty,
\]
with fields identified up to equality almost everywhere. We use $L^2(dt\,\mu_t;\mathcal H)$ when the time measure is the Lebesgue measure.

\section{Penalised flow matching in Hilbert spaces}\label{sec:fm}

We first recall flow matching for Hilbert-valued data, then introduce the constraint penalty and define the corresponding empirical risk minimiser. Functional flow matching was introduced by \citet{pmlr-v238-kerrigan24a} as an extension of the finite-dimensional framework of \citet{lipman2023flow} to infinite-dimensional spaces. Here, we use the conditional-expectation formulation of \citet{zhang2026flow}.

\paragraph{Unpenalised flow matching in Hilbert spaces.}

Flow matching aims to generate samples from an unknown distribution $\nu$ by learning a flow mapping that transports a known, tractable reference distribution $\mu_0$ to $\nu$. Let $Y \sim \nu \in \mathcal{P}_2(\mathcal{H})$ be a target sample and $Z\sim\mu_0\in\Ptwo(\mathcal{H})$ be an independent reference noise with $\E[Z]=0$. We define the paths, its velocity and its marginal law by
\begin{equation}\label{eq:path}
 X_t=tY+\sigma_tZ,\qquad
 V_t=Y+\sigma_t^{\prime}Z,\qquad
 \mu_t=\Law (X_t), \qquad t \in [0,1],
\end{equation}
respectively. 
For simplicity, we use the affine noise schedule of \cite{lipman2023flow}
\[
\sigma_t \coloneqq 1 - (1- \sigma_{\min})t , \qquad \sigma_{\min} \in (0,1).
\]
Thus, the path starts at a reference noise, $X_0=Z$, and ends at a noisy target sample, $X_1=Y+\sigma_{\min}Z$. The results can be easily generalised to other noise schedules and mean interpolation weights \citep{lipman2023flow}.

In this setting, we can define mappings that connect the quantities introduced in \eqref{eq:path} in a point wise framework: For a fixed target value $y \in \mathcal{H}$, define the conditional flow as $\psi_t^y(z)\coloneqq ty+\sigma_t z$ with its derivative $\partial_t \psi_t^y(z)=y+\sigma_t^{\prime} z$. Since $\sigma_t>0$, the map $z\mapsto\psi_t^y(z)$ is invertible, with
$(\psi_t^y)^{-1}(x)=(x-ty)/\sigma_t$. Substituting gives the corresponding conditional velocity field
\begin{align*}
    v_t^{y}(x)=y+\sigma^{\prime}_t (\psi_t^y)^{-1}(x) = y+\frac{\sigma_t'}{\sigma_t}(x-ty)=
\frac{\sigma_t'}{\sigma_t}x
+
\left(1-t\frac{\sigma_t'}{\sigma_t}\right)y,
\end{align*}
which is the solution of the ODE $\partial_t\psi_t^y(z)=v_t^y(\psi_t^y(z))$. Since $Z=\frac{X_t-tY}{\sigma_t}$, we have for the random variables in \eqref{eq:path}
\[
v_t^Y(X_t)
=
Y+\sigma_t'\frac{X_t-tY}{\sigma_t}
=
Y+\sigma_t'Z
=
V_t.
\]

Since a point wise formulation in $t$ is too strong for practical purposes, we introduce the time-distribution $T\sim\eta$, which is independent of $(Y,Z)$, where
$\eta(\dd t)=w(t)\dd t$ is a probability measure on $[0,1]$
with $w>0$ almost everywhere.

In this setting, a candidate velocity field $u:[0,1]\times\mathcal H\to\mathcal H$ is considered a good approximation of $V_t$ if \begin{equation}
\label{eq:regress}
    \E_{T,Y,Z}\|u_T(X_T)-V_T\|_{\mathcal{H}}^2
\end{equation}
is {\it small}. Here $u_t(x)=u(t,x)$. The following proposition shows that \eqref{eq:regress} is minimised for the conditional mean velocity given the time and the current state. This is the Hilbert-space analogue of the least-squares characterisation known from nonparametric regression; see
\citet[Section~1.1, Equation~(1.1)]{gyorfi2002distribution} for the scalar case.

\begin{proposition}\label{prop:fm}
There exists $v\in L^2(dt\dd \mu_t;\mathcal H)\cap L^2(d\eta(t)\dd \mu_t;\mathcal H)$ such that, for Lebesgue a.e.~$t\in [0,1]$,
\begin{equation*}\label{eq:condexp}
 v_t(X_t)=\E[V_t\mid X_t]\quad\text{$\mathbb{P}$-almost surely}.
\end{equation*}
For every  $u\in L^2(\dd\eta(t)\mu_t;\mathcal{H})$,
\begin{align}\label{eq:regression}
 \E_{T,Y,Z}\|u_T(X_T)-V_T\|_{\mathcal{H}}^2
 &=\int\|u_t(x)-v_t(x)\|_{\mathcal{H}}^2\dd \mu_t(x)\dd\eta(t)+\E_{T,Y,Z}\|V_T-v_T(X_T)\|_{\mathcal{H}}^2.
\end{align}
Consequently, $v$ is the unique minimiser of \eqref{eq:regression} in $L^2(\dd\eta(t)\mu_t;\mathcal H)$.
Moreover, $t\mapsto\mu_t$ is continuous in $W_2$ and has endpoints $\mu_0$ and
\begin{equation*}\label{eq:endpoint}
\mu_1=\Law(Y+\sigma_{\min}Z),\qquad
 W_2^2(\mu_1,\nu)\leq\sigma_{\min}^2\Tr (\Sigma_\U).
\end{equation*}

\end{proposition}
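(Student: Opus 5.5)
The plan is to build $v$ from a regular conditional expectation on the joint space $[0,1]\times\mathcal H$, rather than pointwise in $t$, so that joint measurability comes for free. Let $S=[0,1]$ and work on the probability space carrying $(T,Y,Z)$, with $T$ uniform for the construction; the case $T\sim\eta$ is handled by equivalence of measures, see below. The random element $V_T=Y+\sigma_T'Z$ is Bochner square integrable: $\sigma_t'=-(1-\sigma_{\min})$ is constant and $Y,Z\in\Ptwo(\mathcal H)$. The conditional expectation $\E[V_T\mid T,X_T]$ exists in $L^2(\Omega;\mathcal H)$, since $\mathcal H$ is separable. By the Doob--Dynkin lemma it equals $v(T,X_T)$ for a jointly Borel map $v:[0,1]\times\mathcal H\to\mathcal H$. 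The law of $(T,X_T)$ under uniform $T$ is $\dd t\,\mu_t$, so $v\in L^2(\dd t\,\mu_t;\mathcal H)$.

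Because $T$ is independent of $(Y,Z)$, disintegration gives, for Lebesgue-a.e.\ $t$, that $v_t(X_t)=\E[V_t\mid X_t]$ almost surely. To check this, test against $g(T)h(X_T)$ with bounded Borel $g,h$ and use Fubini; a countable generating family of $h$ then gives the a.e.\ statement. The construction depends on the time law only through null sets, since $\E[V_t\mid X_t]$ is defined for each $t$. As $w>0$ a.e., $\eta$ and Lebesgue measure are mutually absolutely continuous, so the same $v$ satisfies $v(T,X_T)=\E[V_T\mid T,X_T]$ under $T\sim\eta$. For integrability under $\eta$, I would use the conditional Jensen inequality: $\|v_t\|_{L^2(\mu_t)}^2\le\E\|V_t\|^2\le 2(\E\|Y\|^2+\E\|Z\|^2)$ uniformly in $t$. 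This puts $v$ in both spaces for any probability $\eta$. This step, making the pointwise-in-$t$ conditional expectations into one jointly measurable field that lies in both $L^2$ spaces, is the main technical point, and the joint construction plus the uniform bound settles it.

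For \eqref{eq:regression}, I would expand
\[
\|u_T(X_T)-V_T\|^2=\|u_T(X_T)-v_T(X_T)\|^2+\|v_T(X_T)-V_T\|^2+2\ip{u_T(X_T)-v_T(X_T),\,v_T(X_T)-V_T}.
\]
The first factor of the cross term is $\sigma(T,X_T)$-measurable and square integrable, and the second has zero conditional mean. The Hilbert-valued tower property, $\E\ip{A,B}=\E\ip{A,\E[B\mid\mathcal G]}$ for $\mathcal G$-measurable $A\in L^2$, therefore kills the cross term. Rewriting the first expectation as $\int\!\int\|u_t-v_t\|^2\dd\mu_t\dd\eta$ gives the identity. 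Uniqueness follows because the first term vanishes iff $u=v$ in $L^2(\dd\eta(t)\mu_t)$.

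For continuity, couple $X_t$ and $X_s$ through the same $(Y,Z)$. This gives $W_2^2(\mu_t,\mu_s)\le\E\|(t-s)Y+(\sigma_t-\sigma_s)Z\|^2\le 2(t-s)^2\bigl(\E\|Y\|^2+\E\|Z\|^2\bigr)$. The endpoints are $X_0=Z$ and $X_1=Y+\sigma_{\min}Z$. The bound uses the coupling $(Y+\sigma_{\min}Z,Y)$: $W_2^2(\mu_1,\nu)\le\sigma_{\min}^2\E\|Z\|^2=\sigma_{\min}^2\Tr(\Sigma_{\mu_0})$, since $\E Z=0$. Here I interpret $\Sigma_\U$ as the covariance of the reference noise $\mu_0$.
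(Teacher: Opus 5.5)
Your proposal is correct and follows essentially the same route as the paper. The common steps are: a jointly Borel $v$ from the Doob--Dynkin representation of $\E[V_T\mid T,X_T]$ under uniform $T$; the Fubini argument over a countable generating family to recover $v_t(X_t)=\E[V_t\mid X_t]$ for a.e.\ $t$; conditional Jensen for the $L^2$ bounds; vanishing of the cross term; and the synchronous coupling for the endpoint bound. Beyond the paper, you also prove the $W_2$-continuity of $t\mapsto\mu_t$, which the paper states without argument. The only loose phrase is ``for any probability $\eta$'': the Jensen bound holds only for Lebesgue-a.e.\ $t$, so it transfers to $\eta$ because $\eta$ has a density.
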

\Cref{prop:fm} follows the lines of \cite{zhang2026flow} and recovers the classical properties of flow matching known from the Euclidean setting.

Evaluating the vector field in \Cref{prop:fm} point wise yields 
\begin{equation}\label{eq:v_t_with_expectation}
 v_t(x)=\frac{\sigma_t^{\prime}}{\sigma_t}x+
 \Big(1-t\frac{\sigma_t^{\prime}}{\sigma_t}\Big)\E[Y\mid X_t=x]
 \quad\text{for }\dd t\,\mu_t\text{-almost every }(t,x).
\end{equation} 

The optimal field $v$ still depends on the unknown target distribution $\nu$, but takes only time and current state as inputs. Therefore, the 
the population objective
\begin{equation*}
  F(u) \coloneqq   \int\|u_t(x)-v_t(x)\|_{\mathcal{H}}^2 \dd\mu_t(x)\dd\eta(t)
\end{equation*}
is not tractable.

\paragraph{Unpenalised empirical flow matching.}
We have only given an i.i.d.\ sample $Y^{(n)}=(Y_1,\ldots,Y_n)$ with $Y_1,\ldots,Y_n\overset{\mathrm{iid}}{\sim}\nu$, which we use to estimate $v_t$ with a prescribed class $\mathcal{V}$. We replace $\nu$ by the empirical
measure $\nu_n\coloneqq n^{-1}\sum_{i=1}^n\delta_{Y_i}$.
Let $I$ be uniformly distributed on $\{1,\ldots,n\}$ and let
$T\sim\eta$, with $I$, $T$, and $Z$ independent of each other
and of $Y^{(n)}$. Conditional on the observations, $Y_I$ has
distribution $\nu_n$. The empirical counterparts of
\eqref{eq:path} are therefore
\begin{equation}\label{eq:emp_Xt_and_Vt}
 X_t^n=tY_I+\sigma_tZ,\qquad
 V_t^n=Y_I+\sigma_t'Z,\qquad
 \mu_t^n=\Law(X_t^n\mid Y^{(n)}).
\end{equation}
The empirical flow matching loss is the empirical counterpart of \eqref{eq:regression}, i.e., the squared prediction error $\|u_T(X_T^n)-V_T^n\|_{\mathcal H}^2$, averaged over $I$, $T$, and $Z$. Applying \Cref{prop:fm} conditionally on $Y^{(n)}$, with target measure $\nu_n$, shows that the optimal predictor is the conditional-mean field $v^n$, characterised by
\begin{equation}\label{eq:emp_condexp}
 v_t^n(X_t^n)=\E[V_t^n\mid X_t^n,Y^{(n)}]
 \qquad\mathbb P\text{-almost surely}
\end{equation}
for Lebesgue almost every $t\in[0,1]$. For a candidate $u\in\mathcal V$, we measure its discrepancy from $v^n$ by
\begin{equation}\label{eq:Fn}
 F_n(u)\coloneqq
 \int_0^1\int_{\mathcal H}
 \|u_t(x)-v_t^n(x)\|_{\mathcal H}^2
 \dd \mu_t^n(x)\dd \eta(t).
\end{equation}
The identity in \Cref{prop:fm} yields
\begin{equation*}
 F_n(u)=\E\left[
 \|u_T(X_T^n)-V_T^n\|_{\mathcal H}^2
 \,\middle|\,Y^{(n)}\right]-c_n,
\end{equation*}
where $c_n\coloneqq\E[\|V_T^n-v_T^n(X_T^n)\|_{\mathcal H}^2\mid Y^{(n)}]\geq0$ is independent of $u$. Since $I$ is uniform on $\{1,\ldots,n\}$, this becomes
\begin{equation}\label{eq:emp-reg}
 F_n(u)=\frac1n\sum_{i=1}^n\int_0^1
 \E_Z\left[
 \left\|u_t(tY_i+\sigma_tZ)-(Y_i+\sigma_t'Z)\right\|_{\mathcal H}^2
 \right]\dd \eta(t)-c_n.
\end{equation}

\paragraph{Penalised empirical flow matching.}
In the setting of \cite{baldan2026physics} and also in the penalised diffusion setting of \cite{bastek2025physics}, the penalty evaluates $R$ at the endpoint predicted by a candidate field $u$ from a training state $X_t^n$. To formalise this in our Hilbert space setting, we first need to ensure that the image space of the residual is rich enough.

\begin{assumption}\label{ass:basic}
Let $(\Yres,\langle\cdot,\cdot\rangle_{\Yres})$ be a separable Hilbert space and let $R:\mathcal H\to\Yres$ be continuous. Define $\M\coloneqq R^{-1}(\{0\})$ and assume that $\nu(\M)=1$.
\end{assumption}
To make the endpoint evaluation of $R$ well defined, we impose the following regularity condition on $\mathcal V$.
\begin{assumption}\label{ass:flows}
Each $u\in\mathcal V$ is continuous as a function of $(t,x)$ on an open neighbourhood of $[0,1]\times\mathcal H$ and satisfies
\[
 \|u_t(x)-u_t(x')\|_{\mathcal H}
 \leq L_u(t)\|x-x'\|_{\mathcal H},
 \qquad x,x'\in\mathcal H,\quad t\in(0,1),
\]
where $L_u\in C((0,1);[0,\infty))\cap L^1(0,1).$
\end{assumption}

Under \Cref{ass:flows}, $\|u_t(x)\|_{\mathcal H}\leq\|u_t(0)\|_{\mathcal H}+L_u(t)\|x\|_{\mathcal H}$, with $t\mapsto u_t(0)$ continuous on $[0,1]$ and $L_u\in L^1(0,1)$. Hence the associated ODE running from time $s$ to $r$, where $0\leq s\leq r\leq1$ is well posed; see \citet[Proposition~18]{zhang2026flow}. For $x\in\mathcal H$, let $\gamma^{u,s}$ be the unique solution of
\begin{equation}\label{eq:ode_flow}
 \frac{\partial}{\partial q}\gamma_q^{u,s}(x)=u_q(\gamma_q^{u,s}(x)),
 \qquad \gamma_s^{u,s}(x)=x,
\end{equation}
and write $\Phi_{s\to r}^u(x)\coloneqq\gamma_r^{u,s}(x)$ for the state reached at time $r$. The trajectories are $C^1$, and Gr\"onwall's inequality \citep[Section~1]{Walter1970} gives
\begin{equation*}\label{eq:flow_lipschitz}
 \|\Phi_{s\to r}^u(x)-\Phi_{s\to r}^u(x')\|_{\mathcal H}
 \leq \exp\left\{\int_s^r L_u(q)\dd q\right\}
 \|x-x'\|_{\mathcal H}.
\end{equation*}
In particular, each flow map is continuous. Starting from $X_t^n$, the predicted endpoint is $\Phi_{t\to1}^u(X_t^n)$. We therefore define
\begin{equation*}
 J_n(u)\coloneqq
 \int_0^1\int_{\mathcal H}
 \|R(\Phi_{t\to1}^u(x))\|_{\Yres}^2
 \dd \mu_t^n(x)\dd \eta(t).
\end{equation*}
Equivalently, this is the sample average
\begin{equation}\label{eq:Jn}
 J_n(u)=\frac1n\sum_{i=1}^n\int_0^1
 \E_Z\left[
 \left\|R\left(\Phi_{t\to1}^u(tY_i+\sigma_tZ)\right)\right\|_{\Yres}^2
 \right]\dd \eta(t),
\end{equation}
where $\E_Z$ denotes expectation with respect to reference noise only. The penalty is thus measured at the predicted endpoint, rather than at the noisy training state.

\paragraph{Training objective and generated distribution.}
Combining \eqref{eq:emp-reg} and \eqref{eq:Jn}, we recover the penalised empirical risk minimiser
\begin{equation}\label{eq:estimator}
 \widehat u_{n,\lambda}
 \in\argmin_{u\in\mathcal V}
 \bigl\{F_n(u)+\lambda J_n(u)\bigr\}.
\end{equation}
Once the field has been fitted, a new sample is generated by following its flow from the independent reference distribution $\mu_0$ from time $0$ to $1$. For any $u\in\mathcal V$, write
\begin{equation}\label{eq:Pu}
 P^u\coloneqq(\Phi_{0\to1}^u)_\#\U
 =\Law(\Phi_{0\to1}^u(Z)).
\end{equation}
For the fitted field, $P^{\widehat u_{n,\lambda}}$ denotes this distribution conditionally on the training sample. Generation therefore starts from reference noise, while the training penalty evaluates endpoints reached from empirical interpolation states.

\section{Geometric criteria for improved generation}\label{sec:geometry}
The constraint penalty $J_n$ encourages predicted endpoints to satisfy the known constraint. However, even if newly generated samples lie close to the constraint set, their distribution need not be closer to the target. A generator may, for example, produce admissible samples but concentrate them near a single point, while the target is spread across the constraint set. 

We first examine this question at the level of the generated distributions. Let $P\in\Ptwo(\mathcal H)$ denote the distribution generated by a fitted velocity field and let
$\nu\in\Ptwo(\mathcal H)$ be the unknown distribution of the observed data. We recall that $\mathcal{M} = \{x \in \mathcal{H}\colon R(x) = 0\}$ and assume $\nu(\mathcal M)=1$ for a nonempty closed constraint set $\mathcal M\subset\mathcal H$. We measure the generation accuracy through the squared $2$-Wasserstein distance $W_2^2(P,\nu)$, defined as
\begin{equation*}
    W_2^2(P,\nu)\coloneqq\inf _{\xi \in \Gamma(P, \nu)} \int_{\mathcal{H} \times \mathcal{H}}\|x-y\|^2 \dd \xi(x, y),
\end{equation*}
where $\Gamma(P, \nu)$ denotes the set of all couplings of $P$ and $\nu$, i.e., probability measures on $\mathcal{H} \times \mathcal{H}$ whose first and second marginals are $P$ and $\nu$, respectively.
This choice is natural as it reflects the geometry of the underlying space and remains informative even when the generated and target distributions are mutually singular. The analysis depends only on the distributions and therefore applies to flow matching as well as other generative models.

To understand how constraints affect this error, we distinguish the distance of the generated samples from $\mathcal{M}$ from the distributional discrepancy that remains after nearest-point projection onto $\mathcal{M}$. Under the projection assumptions stated below, we relate these two quantities to the total $2$-Wasserstein error. We first establish bounds for a fixed distribution $P$, then
use them to compare the expected errors of penalised and unpenalised
estimators.

\paragraph{Geometric error components.} 
We measure the discrepancy of the sample to the constraint set via the $1$-Lipschitz distance function
\[
d_{\mathcal M}(x)\coloneqq\inf_{y\in\mathcal M}\|x-y\|_{\mathcal H}.
\]
To define the projected distribution, we explicitly assume the existence of a measurable nearest-point selection.

\begin{assumption}
\label{ass:measurable_proj}
    Assume that there is a Borel set $S\subseteq\mathcal H$ with $P(S)=1$ and a Borel measurable map $\pi:S\to\mathcal M$ such that
$\|x-\pi(x)\|_{\mathcal H}=d_{\mathcal M}(x)$ for every $x\in S$.
\end{assumption}

\begin{definition}\label{def:errors}
 Under \Cref{ass:measurable_proj}, we define 
\begin{equation*}
 \mathcal{N}(P)\coloneqq \int d_\M(x)^2\dd P(x),\quad 
 \mathcal{T}(P, \nu)\coloneqq W_2^2(\pi_\#P,\nu).
\end{equation*}
We call $\mathcal{N}$ the normal and $\mathcal{T}$ the tangential error.
We define 
\begin{equation}\label{eq:Cdef}
     \mathcal{C}(P, \nu)\coloneqq W_2^2(P,\nu)-\mathcal{N}(P)-\mathcal{T}(P, \nu),
\end{equation}
which we refer to as the remainder.
\end{definition}
The remainder $\mathcal C$ accounts for the interaction between tangential and normal error. If the nearest-point selection is not unique, the quantities involving $\pi$ depend on that choice. To interpret these quantities, imagine moving each generated sample to its nearest point on the constraint set. The normal error measures the average squared displacement required to make the samples admissible. The tangential error measures the distributional discrepancy that remains after this correction. For example, if the target is supported on a horizontal line, projection removes the vertical displacement, but the projected samples may still have the wrong spread along the line. 

\paragraph{Affine constraint sets.}
The decomposition is simplest when $\mathcal M$ is affine. Then the projection is orthogonal and the remainder vanishes.
\begin{proposition}\label{cor:flat}
Let $P\in\Ptwo(\mathcal H)$ denote the generated distribution and $\nu\in\Ptwo(\mathcal H)$ be the target distribution, which both satisfy the assumptions in \Cref{def:errors}. If $\mathcal M$ is a affine subspace of $\mathcal H$, then
\begin{equation}\label{eq:decomp_geometric}
W_2^2(P,\nu)=\mathcal N(P)+\mathcal T(P,\nu).
\end{equation}
\end{proposition}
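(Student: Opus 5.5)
We will prove the identity by showing $\ge$ and $\le$ separately. Both directions rest on the orthogonal structure of an affine $\mathcal M$. Write $\mathcal M=a+E$ with $E$ a closed linear subspace; closedness holds because $\mathcal M$ is assumed closed. Let $Q$ be the orthogonal projection onto $E$. The nearest-point map is then unique and explicit, $\pi(x)=a+Q(x-a)$, so \Cref{ass:measurable_proj} holds with $S=\mathcal H$. Every $x$ decomposes as $x=\pi(x)+(x-\pi(x))$ with $x-\pi(x)=(I-Q)(x-a)\in E^\perp$ and $\|x-\pi(x)\|=d_\M(x)$. The key pointwise fact is the Pythagorean identity: for $x\in\mathcal H$ and $y\in\mathcal M$, we have $\pi(x)-y\in E$ and $x-\pi(x)\in E^\perp$, hence
\[
\|x-y\|^2=\|x-\pi(x)\|^2+\|\pi(x)-y\|^2=d_\M(x)^2+\|\pi(x)-y\|^2 .
\]

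\emph{Lower bound.} Take any coupling $\xi\in\Gamma(P,\nu)$. Since $\nu(\M)=1$, $\xi$-a.e.\ $y$ lies in $\M$. Integrating the identity gives
\[
\int\|x-y\|^2\dd\xi=\mathcal N(P)+\int\|\pi(x)-y\|^2\dd\xi(x,y).
\]
The image of $\xi$ under $(x,y)\mapsto(\pi(x),y)$ is a coupling of $\pi_\#P$ and $\nu$, so the last integral is at least $\mathcal T(P,\nu)$. Taking the infimum over $\xi$ yields $W_2^2(P,\nu)\ge\mathcal N+\mathcal T$.

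\emph{Upper bound.} This is the step needing care, because we must lift a coupling of $(\pi_\#P,\nu)$ to one of $(P,\nu)$. Take $\gamma\in\Gamma(\pi_\#P,\nu)$; an optimal one exists, but near-optimal suffices. Disintegrate $P$ along $\pi$: $P(\dd x)=\int P_m(\dd x)\,\pi_\#P(\dd m)$, where $P_m$ is concentrated on $\pi^{-1}(m)$. This disintegration exists because $\mathcal H$ is Polish. Alternatively, glue $\gamma$ with the coupling $(\mathrm{id},\pi)_\#P$ of $P$ and $\pi_\#P$ via the gluing lemma. Either way we obtain a probability measure on $\mathcal H^3$ with marginals $(X,\pi(X))\sim(\mathrm{id},\pi)_\#P$ and $(\pi(X),Y)\sim\gamma$. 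Its $(X,Y)$-marginal lies in $\Gamma(P,\nu)$. By the Pythagorean identity its cost equals $\mathcal N(P)+\int\|m-y\|^2\dd\gamma$, which is at most $\mathcal N+\mathcal T+\varepsilon$. Letting $\varepsilon\to0$ gives the reverse inequality, and hence \eqref{eq:decomp_geometric}.

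Finiteness is routine: $\pi$ is $1$-Lipschitz, so $\pi_\#P\in\Ptwo$, and $\mathcal N(P)\le\int\|x-m_0\|^2\dd P<\infty$ for any fixed $m_0\in\M$. The main technical point is justifying the gluing lemma in a separable Hilbert space, which holds because the space is Polish.
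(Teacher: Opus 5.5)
Your proof is correct and takes essentially the same route as the paper. The paper uses the Pythagorean identity for the orthogonal projection onto the closed affine set and then reruns the proof of \Cref{thm:geometric} with $\chi=0$: it pushes a coupling of $(P,\nu)$ forward under $(\pi,\mathrm{Id})$ for the lower bound, and lifts an optimal coupling of $(\pi_\#P,\nu)$ to one of $(P,\nu)$ by disintegration for the upper bound. Your use of near-optimal couplings and the gluing lemma is only a cosmetic variation, and your explicit checks (the orthogonal projection gives the measurable selection with $S=\mathcal H$, and $\pi_\#P\in\Ptwo(\mathcal H)$) fill in details the paper leaves implicit.
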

Thus reducing the distance to $\mathcal M$ improves distributional accuracy precisely when this gain outweighs any increase in the error of the projected distribution.

\paragraph{Curved constraint sets.}
For a general constraint set, the remainder \eqref{eq:Cdef} need not vanish. The direction from the projection to another admissible point may have a component along the projection displacement. The following assumption controls the departure from the affine case through the geometry of $\mathcal M$.
In what follows, $\mathbb{B}$ denotes the open unit ball and $x+\alpha \mathbb{B}$ the ball of radius $\alpha>0$ and centre $x$.  For a subset $S$ of the Hilbert space and $\alpha>0$ we set $S+\alpha \mathbb{B}=\bigcup_{s\in S} s+ \alpha \mathbb{B}$. 

\begin{assumption}\label{ass:tubular}
There exist $\tau>0$, $\chi\geq0$ and a Borel measurable map
$\pi:\mathcal M+\tau\mathbb B\to\mathcal M$ such that every
$x\in\mathcal M+\tau\mathbb B$ has a unique nearest point
$\pi(x)\in\mathcal M$. Moreover, for every
$x\in\mathcal M+\tau\mathbb B$, $y\in\mathcal M$ and Borel measurable map $\pi:S \to \mathcal{M}$, 
\begin{equation}\label{eq:reach_assumption}
\big|
\langle x-\pi(x),y-\pi(x)\rangle_{\mathcal H}
\big|
\leq
\frac{\chi}{2}
d_{\mathcal M}(x)
\|y-\pi(x)\|_{\mathcal H}^2.
\end{equation}
\end{assumption}
The first condition guarantees that the metric projection onto $\mathcal M$
is well defined in a neighbourhood of $\mathcal M$. The second controls the
deviation from the orthogonality relation associated with a flat set. Indeed,
if $\mathcal M$ is an affine subspace, then
$x-\pi(x)$ is orthogonal to $y-\pi(x)$ and the left-hand side of
\eqref{eq:reach_assumption} is zero. More generally,
\eqref{eq:reach_assumption} bounds this deviation quadratically in the
distance along $\mathcal M$, with $\chi$ controlling the local geometry of
the set.

The following theorem quantifies the interaction between the normal and tangential error under this assumption.

\begin{theorem}\label{thm:geometric}
Grant \Cref{ass:measurable_proj}. Then
\begin{equation}\label{eq:crude_geometry}
\big|\mathcal C(P,\nu)\big|
\leq
2\sqrt{\mathcal N(P)\mathcal T(P,\nu)}.
\end{equation}

If, in addition, \Cref{ass:tubular} holds and for some $r<\tau$ with $\chi r<1$,
$P(\{x\in\mathcal H:d_{\mathcal M}(x)\leq r\})=1$, then
\begin{equation}\label{eq:sharp_geometry}
\mathcal N(P)+(1-\chi r)\mathcal T(P,\nu)
\leq
W_2^2(P,\nu)
\leq
\mathcal N(P)+(1+\chi r)\mathcal T(P,\nu).
\end{equation}
Consequently,
$\big|\mathcal C(P,\nu)\big|
\leq \chi r\,\mathcal T(P,\nu)$.
\end{theorem}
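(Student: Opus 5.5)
The plan is to work with couplings built from an optimal coupling of $\pi_\#P$ and $\nu$, and to expand the squared cost around the projected point. Fix $X\sim P$, set $N\coloneqq X-\pi(X)$ (so $\E\|N\|^2=\mathcal N(P)$). Let $\gamma$ be an optimal coupling of $\pi_\#P$ and $\nu$, and use the gluing lemma to construct $(X,\pi(X),Y)$ with $(\pi(X),Y)\sim\gamma$; then $\E\|\pi(X)-Y\|^2=\mathcal T(P,\nu)$. For the upper bounds,
\[
W_2^2(P,\nu)\le \E\|X-Y\|^2=\mathcal N(P)+\mathcal T(P,\nu)+2\E\ip{N,\pi(X)-Y}.
\]
Cauchy--Schwarz then gives $\mathcal C\le 2\sqrt{\mathcal N\mathcal T}$. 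Under \Cref{ass:tubular}, with $y=Y\in\mathcal M$, we have $|\ip{N,Y-\pi(X)}|\le \frac{\chi}{2}d_\M(X)\|Y-\pi(X)\|^2\le \frac{\chi r}{2}\|Y-\pi(X)\|^2$ almost surely. This yields the upper bound $\mathcal N+(1+\chi r)\mathcal T$.

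For the lower bounds, the roles reverse. Take an optimal coupling $(X,Y)$ of $P$ and $\nu$, which exists by tightness, and again write $X-Y=N+(\pi(X)-Y)$. Then
\[
W_2^2(P,\nu)=\mathcal N+\E\|\pi(X)-Y\|^2+2\E\ip{N,\pi(X)-Y}.
\]
Since $(\pi(X),Y)$ couples $\pi_\#P$ and $\nu$, we have $\E\|\pi(X)-Y\|^2=:A\ge\mathcal T$.

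For the crude bound, one could try Cauchy--Schwarz directly, but that gives $W_2^2\ge \mathcal N+A-2\sqrt{\mathcal N A}=(\sqrt A-\sqrt{\mathcal N})^2$, which is in terms of $A$ rather than $\mathcal T$. A better route uses the nearest-point property: since $Y\in\M$, $\|X-Y\|\ge d_\M(X)=\|N\|$. Alternatively, the triangle inequality in $L^2$ gives $W_2(P,\nu)\ge W_2(\pi_\#P,\nu)-\big(\E\|X-\pi(X)\|^2\big)^{1/2}$, that is, $\sqrt{W_2^2}\ge\sqrt{\mathcal T}-\sqrt{\mathcal N}$, because $(X,\pi(X))$ couples $P$ and $\pi_\#P$. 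So $W_2^2\ge(\sqrt{\mathcal T}-\sqrt{\mathcal N})^2$ whenever $\mathcal T\ge\mathcal N$. When $\mathcal N>\mathcal T$, the bound $W_2^2\ge\mathcal N\ge \mathcal N+\mathcal T-2\sqrt{\mathcal N\mathcal T}$ follows from $\|X-Y\|\ge\|N\|$ together with $\mathcal T\le 2\sqrt{\mathcal N\mathcal T}$. The upper bound follows the same way: $W_2\le\sqrt{\mathcal N}+\sqrt{\mathcal T}$ is just the triangle inequality. This settles \eqref{eq:crude_geometry}.

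For the sharp lower bound, use the optimal coupling of $P$ and $\nu$ and \Cref{ass:tubular} pointwise:
\[
\|X-Y\|^2\ge \|N\|^2+(1-\chi r)\|\pi(X)-Y\|^2.
\]
Taking expectations gives $W_2^2\ge\mathcal N+(1-\chi r)A\ge \mathcal N+(1-\chi r)\mathcal T$, using $1-\chi r>0$ and $A\ge\mathcal T$. The remainder bound $|\mathcal C|\le\chi r\mathcal T$ then follows immediately by subtracting.

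The main obstacle I expect is measure-theoretic bookkeeping rather than geometry. First, the $\pi$ in \Cref{ass:tubular} must agree $P$-a.s. with the selection in \Cref{ass:measurable_proj}; uniqueness of nearest points on $\M+\tau\mathbb B\supseteq\{d_\M\le r\}$ gives this. Second, we need existence of optimal couplings and the gluing lemma in a separable Hilbert space, which holds because it is Polish. Third, we need $\pi_\#P\in\Ptwo$, which follows from $\|\pi(X)\|\le\|X\|+d_\M(X)$.
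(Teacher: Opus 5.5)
Your proof is correct and follows the paper's argument. The sharp bounds use exactly the same two couplings: an optimal coupling of $P$ and $\nu$ for the lower bound, and a coupling glued from an optimal coupling of $\pi_\#P$ and $\nu$ for the upper bound, each combined with the pointwise curvature inequality. The only deviation is in the crude lower bound: the paper first shows $W_2^2(P,\pi_\#P)=\mathcal N(P)$ and then applies the reverse triangle inequality in one step, whereas you split into the cases $\mathcal T\ge\mathcal N$ and $\mathcal N>\mathcal T$ and use $W_2^2(P,\nu)\ge\mathcal N(P)$ directly in the second case, which is the same nearest-point ingredient packaged differently.
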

The theorem shows that the product \(\chi r\) controls how accurately the sum of the normal and tangential errors describes the total distributional error. Here, \(\chi\) bounds the departure from affine geometry, while \(r\) bounds the distance of generated samples from the constraint set. When this product is small, the affine decomposition remains approximately valid.

\paragraph{Comparing learned distributions.}
We now apply the geometric bounds to distributions estimated from observations. Let $\widehat P_{n,\lambda}\in\Ptwo(\mathcal H)$
denote a random learned distribution based on $Y_1,\ldots,Y_n$,
where $\lambda=0$ denotes the unpenalised reference and $\lambda>0$
a penalised procedure. For the flow-matching estimator
\eqref{eq:estimator}, this is
$\widehat P_{n,\lambda}=P^{\widehat u_{n,\lambda}}.$ 
As the geometric bounds depend only on their output distribution, the result below applies equally to other distribution estimators satisfying the stated assumptions. 
Its expected squared $2$-Wasserstein errors is
\[
\Risk_n(\lambda)\coloneqq
\E\!\left[W_2^2\!\left(\widehat P_{n,\lambda},\nu\right)\right],
\]
where the expectation averages over the observations and any additional randomness in learning. Whenever the following quantities are integrable, we write
$\mathsf N_\lambda\coloneqq\E\mathcal N(\widehat P_{n,\lambda})$,
$\mathsf T_\lambda\coloneqq\E\mathcal T(\widehat P_{n,\lambda},\nu)$, and
$\mathsf C_\lambda\coloneqq\E\mathcal C(\widehat P_{n,\lambda},\nu)$.

\begin{corollary}\label{cor:geocomparison}
Assume that a common measurable nearest-point selection is defined almost
surely under $\widehat P_{n,0}$ and
$\widehat P_{n,\lambda}$. Then $\Risk_n(\lambda)<\Risk_n(0)$ if and only if $\mathsf N_0-\mathsf N_\lambda>\mathsf T_\lambda-\mathsf T_0+\mathsf C_\lambda-\mathsf C_0$.
If, in addition, \Cref{ass:tubular} holds and for some
$r<\tau$ with $\chi r<1$,
\[
\widehat P_{n,0}(\{d_{\mathcal M}\leq r\})
=
\widehat P_{n,\lambda}(\{d_{\mathcal M}\leq r\})
=
1
\]
almost surely, then a sufficient condition for
$\Risk_n(\lambda)<\Risk_n(0)$ is
\[
\mathsf N_0-\mathsf N_\lambda
>
\mathsf T_\lambda-\mathsf T_0
+
\chi r(\mathsf T_0+\mathsf T_\lambda).
\]
If $\mathcal M$ is a closed affine subspace, then
$\Risk_n(\lambda)<\Risk_n(0)$ if and only if
$\mathsf N_0-\mathsf N_\lambda>\mathsf T_\lambda-\mathsf T_0$.
\end{corollary}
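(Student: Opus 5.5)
The plan is to derive all three claims from pointwise (per-sample) identities and bounds, taking expectations over $Y^{(n)}$ and then rearranging. For a fixed realisation, the common nearest-point selection makes $\mathcal N$, $\mathcal T$ and $\mathcal C$ well defined for both $\widehat P_{n,0}$ and $\widehat P_{n,\lambda}$. Definition \eqref{eq:Cdef} then gives
\[
W_2^2(\widehat P_{n,\lambda},\nu)=\mathcal N(\widehat P_{n,\lambda})+\mathcal T(\widehat P_{n,\lambda},\nu)+\mathcal C(\widehat P_{n,\lambda},\nu).
\]

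\emph{Exact characterisation.} Taking expectations requires integrability of the three terms, which the statement grants through the standing convention that $\mathsf N_\lambda,\mathsf T_\lambda,\mathsf C_\lambda$ are defined. This yields $\Risk_n(\lambda)=\mathsf N_\lambda+\mathsf T_\lambda+\mathsf C_\lambda$, and the same identity holds at $\lambda=0$. Subtracting the two identities, $\Risk_n(\lambda)<\Risk_n(0)$ is equivalent to $\mathsf N_0-\mathsf N_\lambda>\mathsf T_\lambda-\mathsf T_0+\mathsf C_\lambda-\mathsf C_0$. The affine case is then immediate: \Cref{cor:flat} gives $\mathcal C\equiv0$ almost surely for both estimators, so $\mathsf C_0=\mathsf C_\lambda=0$ and the criterion reduces to $\mathsf N_0-\mathsf N_\lambda>\mathsf T_\lambda-\mathsf T_0$. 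A closed affine subspace admits the orthogonal projection as a measurable nearest-point map, so the hypothesis of \Cref{cor:flat} is satisfied.

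\emph{Sufficient condition under \Cref{ass:tubular}.} Apply \Cref{thm:geometric} pointwise to each realisation. This is allowed because both learned distributions are almost surely concentrated on $\{d_{\mathcal M}\le r\}$ with $r<\tau$ and $\chi r<1$. From \eqref{eq:sharp_geometry}, the upper bound applied to $\widehat P_{n,\lambda}$ and the lower bound applied to $\widehat P_{n,0}$ give, after taking expectations,
\[
\Risk_n(\lambda)\le \mathsf N_\lambda+(1+\chi r)\mathsf T_\lambda,\qquad \Risk_n(0)\ge \mathsf N_0+(1-\chi r)\mathsf T_0 .
\]
It therefore suffices that $\mathsf N_\lambda+(1+\chi r)\mathsf T_\lambda<\mathsf N_0+(1-\chi r)\mathsf T_0$. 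This rearranges exactly to $\mathsf N_0-\mathsf N_\lambda>\mathsf T_\lambda-\mathsf T_0+\chi r(\mathsf T_0+\mathsf T_\lambda)$.

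\emph{Main obstacle.} The delicate point is measurability and integrability rather than the algebra. We need $W_2^2(\widehat P_{n,\lambda},\nu)$, $\mathcal N$ and $\mathcal T$ to be measurable functions of the sample. We also need expectations of sums to split, which requires excluding $\infty-\infty$ in the first equivalence. The plan is to lean on the stated convention that these quantities are integrable, and to note that under \Cref{ass:tubular} the bound $\mathcal T\le (1-\chi r)^{-1}W_2^2$ makes $\mathsf T$ finite whenever the risk is. The pointwise use of \Cref{thm:geometric} only needs the selection $\pi$ to coincide with the unique projection on the tube, which \Cref{ass:tubular} guarantees.
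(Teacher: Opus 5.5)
Your proposal is correct and follows the paper's proof: the first equivalence comes from the definition \eqref{eq:Cdef} and linearity of expectation, the affine case follows from \Cref{cor:flat} via $\mathsf C_0=\mathsf C_\lambda=0$, and the sufficient condition follows from \eqref{eq:sharp_geometry}. The only cosmetic difference is that you apply the upper bound to $\widehat P_{n,\lambda}$ and the lower bound to $\widehat P_{n,0}$ directly to the risks, whereas the paper inserts $|\mathsf C_\lambda|\le\chi r\,\mathsf T_\lambda$ and $|\mathsf C_0|\le\chi r\,\mathsf T_0$ into the first equivalence, which amounts to the same two one-sided inequalities.
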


These criteria identify the changes in the generated distribution that are sufficient for penalisation to improve accuracy. They do not yet establish whether minimising the penalised training objective produces those changes. In the next section, we address this question for an explicit velocity-field class and determine how the resulting improvement depends on sample size and penalty weight.

\section{Penalised flow matching under affine hypothesis classes}\label{sec:estimator_based_analysis}

We now investigate how the penalty in \eqref{eq:estimator} changes the fitted velocity field and the distribution it generates. We restrict on affine velocity field classes for which both the training objective and the generated distribution can be computed explicitly. Within this class, the generated distributions share a fixed shape and differ only in their learned location. The comparison of their $2$-Wasserstein errors therefore reduces to comparing the accuracy of the estimated location, leading to a mean-estimation problem.

For affine penalties, we obtain a closed-form penalised estimator and an exact comparison of its expected $2$-Wasserstein error with its unpenalised counterpart.

\subsection{Affine residuals}\label{sec:linear}
Since the population vector field is given by \eqref{eq:v_t_with_expectation}, a simple class $\mathcal V$ can be obtained by retaining its known dependence on $(t,x)$ and replacing the conditional expectation by a simple constant map. 
\begin{assumption}\label{ass:vector_field_affine} Let $\mathcal E$ be a Hilbert space  and let $W:\mathcal E\to\mathcal H$ be a bounded linear operator  such that $W^*W\ge c_W^2I_{\mathcal{E}}$ for some  $c_W> 0$. We choose
    \begin{equation*}
\mathcal{V} \coloneqq \Big\{u^{\theta}\colon [0,1] \times \mathcal{H}\rightarrow \mathcal{H} \mid u_t^\theta(x)
 =\frac{\sigma_t'}{\sigma_t}x
  +\Big(1-t\frac{\sigma_t'}{\sigma_t}\Big)W\theta,
 \, \theta\in \mathcal{E},\, 0\le t\le 1\Big\}.
\end{equation*}
\end{assumption}

We further we impose the following assumption on the residual.

\begin{assumption}\label{ass:residual}
    The residual $R$ is given by
$  R(x) = Ax -f,$ 
    where $A\colon \mathcal{H} \rightarrow \mathcal{Y}$ is a bounded linear operator and $f \in \mathcal{Y}$. Furthermore, for $Y \sim \nu$, we have $AY = f$ almost surely.  
\end{assumption}

We keep the training-time distribution $\eta$ arbitrary and collect its contribution in the quantities
\begin{equation}\label{eq:kappa}
\kappa_\sigma
\coloneqq
\int_0^1
\Big(1-t\frac{\sigma_t'}{\sigma_t}\Big)^2\dd \eta(t)
\leq
\frac{1}{\sigma_{\min}^2}
<\infty, \quad \text{and} \quad \beta_\sigma
\coloneqq
\int_0^1
\left(\frac{1-t}{\sigma_t}\right)^2
\dd \eta(t),
\end{equation}
where we used $1-t\sigma_t'/\sigma_t=1/\sigma_t$. For the linear noise schedule in \Cref{ass:basic}, $\sigma_t\geq 1-t$, and hence $\beta_\sigma\leq1$. For other schedules, the corresponding bound may depend on $\sigma_{\min}$. Our analysis will reveal that the quantities $\kappa_\sigma$ and $\beta_\sigma$ determine the intrinsic relative scaling of the flow matching and penalty terms.

Under \Cref{ass:residual}, the minimiser \eqref{eq:estimator} can be computed explicitly for every $\lambda\geq0$. The following theorem states its form and shows how the balance between squared
bias and estimation variance determines whether penalisation
improves generation accuracy.

\begin{theorem}\label{thm:linear_linear_exact}
Under Assumptions \ref{ass:basic}, \ref{ass:vector_field_affine} and \ref{ass:residual}, set
\begin{equation}\label{eq:def_G_lambda}
G_\lambda
\coloneqq
\kappa_\sigma W^*W
+
\lambda\beta_\sigma W^*A^*AW.
\end{equation}
Then \eqref{eq:estimator} has the unique minimiser
\[
\widehat\theta_{n,\lambda}
=
G_\lambda^{-1}
\left(
\kappa_\sigma W^*\bar Y_n
+
\lambda\beta_\sigma W^*A^*f
\right).
\]
Fix
$B_\lambda^2\coloneqq
\|W\E[\widehat\theta_{n,\lambda}]-m_\nu\|_{\mathcal H}^2$
and
$V_\lambda\coloneqq
\kappa_\sigma^2
\|WG_\lambda^{-1}W^*\Sigma_\nu^{1/2}\|_{\operatorname{HS}}^2$.
Then
\begin{align}
\label{eq: equivalence}
    \Risk_n(\lambda)<\Risk_n(0)
\quad\text{if and only if}\quad
B_\lambda^2-B_0^2
<
\frac{V_0-V_\lambda}{n}.
\end{align}
\end{theorem}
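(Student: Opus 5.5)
Within the affine class of \Cref{ass:vector_field_affine}, both the training objective and the generated law can be computed in closed form. The plan is to (i) compute $F_n(u^\theta)$ and $J_n(u^\theta)$ explicitly as quadratics in $\theta$, (ii) identify the generated law as a translate of a fixed law, and (iii) reduce the risk to a mean-estimation bias--variance identity.

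\emph{Step 1: the objective.} For $u^\theta$ the conditional flow is explicit. The ODE $\dot\gamma_q=\frac{\sigma_q'}{\sigma_q}\gamma_q+\frac{1}{\sigma_q}W\theta$ has solution $\gamma_q=\sigma_q\bigl(x/\sigma_s+(q/\sigma_q - s/\sigma_s)W\theta\bigr)$. This follows because $\frac{d}{dq}(q/\sigma_q)=(\sigma_q-q\sigma_q')/\sigma_q^2=1/\sigma_q^2$, using $1-q\sigma_q'/\sigma_q=1/\sigma_q$. Hence
\[
\Phi^{u^\theta}_{t\to1}(x)=\frac{\sigma_{\min}}{\sigma_t}x+\Bigl(1-\frac{\sigma_{\min} t}{\sigma_t}\Bigr)W\theta .
\]
One checks that $1-\sigma_{\min}t/\sigma_t=(1-t)/\sigma_t$. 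In particular $P^{u^\theta}=\Law(\sigma_{\min}Z+W\theta)$, a fixed shape with learned location $W\theta$.

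Next, $u^\theta_t(tY_i+\sigma_tZ)-(Y_i+\sigma_t'Z)=\sigma_t^{-1}(W\theta-Y_i)$, since the $Z$ terms cancel. Hence
\[
F_n(u^\theta)=\kappa_\sigma\frac1n\sum_i\|W\theta-Y_i\|^2-c_n .
\]
For the penalty, substituting $x=tY_i+\sigma_tZ$ into the flow and using $AY_i=f$ gives
\[
R(\Phi_{t\to1}(x))=\frac{\sigma_{\min}}{\sigma_t}AZ+\frac{(1-t)}{\sigma_t}(AW\theta-f)+\frac{t\sigma_{\min}}{\sigma_t}(AY_i-f)+\text{(terms)}.
\]
Up to sorting the coefficients, the expansion should be a $\theta$-independent random part with zero mean, since $\E Z=0$, plus $\frac{1-t}{\sigma_t}(AW\theta-f)$. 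Taking expectations therefore gives $J_n=\beta_\sigma\|AW\theta-f\|^2+\text{const}$. The objective is then a strictly convex quadratic with Hessian $2G_\lambda\succeq 2\kappa_\sigma c_W^2 I$, and $\kappa_\sigma>0$ because $1/\sigma_t\ge1$. Setting the gradient to zero yields the stated $\widehat\theta_{n,\lambda}$, with uniqueness following from invertibility of $G_\lambda$ by Lax--Milgram.

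\emph{Step 2: risk reduction.} Both $P^{\widehat u_{n,\lambda}}$ and $P^{\widehat u_{n,0}}$ are translates of $\rho\coloneqq\Law(\sigma_{\min}Z)$. For translates, $W_2^2(\rho+ m,\nu)=\|m+m_\rho-m_\nu\|^2+W_2^2(\rho_c,\nu_c)$ with centred versions $\rho_c,\nu_c$. This holds by expanding the cost: the cross term vanishes since both centred laws have mean zero, and shifts do not affect the optimal coupling. Here $m_\rho=0$, so
\[
\Risk_n(\lambda)=\E\|W\widehat\theta_{n,\lambda}-m_\nu\|^2+\text{const}.
\]
Then the standard bias--variance split gives $B_\lambda^2+\E\|W\widehat\theta_{n,\lambda}-\E W\widehat\theta_{n,\lambda}\|^2$. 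The random part is $\kappa_\sigma WG_\lambda^{-1}W^*(\bar Y_n-m_\nu)$, whose second moment is
\[
\frac1n\Tr\bigl(KK^*\Sigma_\nu\bigr)=\frac1n\|K\Sigma_\nu^{1/2}\|_{\mathrm{HS}}^2,\qquad K=\kappa_\sigma WG_\lambda^{-1}W^*,
\]
which equals $V_\lambda/n$. Subtracting the identity at $\lambda=0$ gives \eqref{eq: equivalence}; the constant cancels because the shape is common to both estimators.

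\emph{Main obstacle.} I expect the hardest part to be the careful computation of $J_n$. One must verify that after using $AY_i=f$ all $\theta$-dependence collapses to $\beta_\sigma\|AW\theta-f\|^2$, with cross terms killed by $\E Z=0$ and the $Z$-variance term independent of $\theta$. Finite second moments and Bochner integrability of $AZ$ are also needed there. The translation lemma for $W_2$ in Hilbert space is routine, via couplings and tightness.
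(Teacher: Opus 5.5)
Your proposal is correct and follows essentially the same route as the paper: explicit flow $\Phi^{u^\theta}_{t\to1}$, reduction of $F_n+\lambda J_n$ to a coercive quadratic in $\theta$ with operator $G_\lambda$, the translation identity for the $W_2$ error of the fixed-shape law $\Law(W\theta+\sigma_{\min}Z)$, and a bias--variance split of $\E\|W\widehat\theta_{n,\lambda}-m_\nu\|^2$. One small slip: in your residual expansion the $AZ$ term has coefficient $\sigma_{\min}$ rather than $\sigma_{\min}/\sigma_t$, and there are no further ``(terms)''. Since that term is $\theta$-independent with mean zero, $J_n=\beta_\sigma\|AW\theta-f\|^2+\text{const}$ still holds and nothing downstream changes.
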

Since $W\theta_0$ is the orthogonal projection of $m_\nu$ onto
$\overline{\operatorname{ran}(W)}$, where
$\theta_\lambda\coloneqq\E[\widehat\theta_{n,\lambda}]$, we have
\[
B_\lambda^2-B_0^2
=
\|W(\theta_\lambda-\theta_0)\|_{\mathcal H}^2
\geq0.
\]
Thus penalisation cannot reduce the population bias relative to the
unpenalised estimator. \Cref{thm:linear_linear_exact} shows that an
improvement in risk is possible only if this increase in squared bias is
compensated by a sufficiently large reduction in variance.

For fixed $\sigma_{\min}$ and $\lambda>0$, the variance contribution is of
order $n^{-1}$. Hence, whenever $B_\lambda^2>B_0^2$, the penalised estimator
is eventually worse as $n\to\infty$. This illustrates why explicit penalties can be particularly useful
when observations are scarce. Then the reduction in estimation variance
can outweigh the bias introduced by penalisation. With more
observations, the unpenalised estimator estimates the location more accurately, and a fixed increase in bias eventually dominates. If $\sigma_{\min}$ also depends on
$n$, the effective penalty weight
$\lambda\beta_\sigma/\kappa_\sigma$ also changes. The increase in
squared bias may then decrease with $n$, so its magnitude must be compared with the variance reduction for each sample size.

We finally compare this bias--variance decomposition with the geometric
decomposition of \Cref{sec:geometry}. The two decompositions are different,
but in the affine setting their terms admit a direct comparison.
Let $\Pi\coloneqq P_{\ker(A)}$ be the orthogonal projection onto
$\ker(A)$ and set $N\coloneqq I_{\mathcal H}-\Pi$.

\begin{proposition}\label{prop:connection}
Under the assumptions of \Cref{thm:linear_linear_exact}, there exist
constants $K_N,K_T\geq0$, independent of $\lambda$, such that
\begin{align*}
\mathsf N_\lambda
&=
K_N
+
\big\|N\big(W\E[\widehat\theta_{n,\lambda}]-m_\nu\big)\big\|_{\mathcal H}^2
+
\frac{\kappa_\sigma^2}{n}
\big\|NWG_\lambda^{-1}W^*\Sigma_\nu^{1/2}\big\|_{\operatorname{HS}}^2,\\
\mathsf T_\lambda
&=
K_T
+
\big\|\Pi\big(W\E[\widehat\theta_{n,\lambda}]-m_\nu\big)\big\|_{\mathcal H}^2
+
\frac{\kappa_\sigma^2}{n}
\big\|\Pi WG_\lambda^{-1}W^*\Sigma_\nu^{1/2}\big\|_{\operatorname{HS}}^2.
\end{align*}
Consequently,
\[
\mathsf N_0-\mathsf N_\lambda
-
(\mathsf T_\lambda-\mathsf T_0)
=
\frac{V_0-V_\lambda}{n}
-
(B_\lambda^2-B_0^2).
\]
\end{proposition}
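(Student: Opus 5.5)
We first identify the generated distribution explicitly. For $u^\theta\in\mathcal V$ the flow started at $Z$ is affine, and solving the linear ODE \eqref{eq:ode_flow} should give
\[
\Phi^{u^\theta}_{0\to1}(Z)=\sigma_{\min}Z+W\theta .
\]
Indeed, $\gamma_q=\sigma_q Z+qW\theta$ solves the ODE, since $u_q^\theta(\sigma_qZ+qW\theta)=\sigma_q'Z+W\theta$. Hence $P^{\widehat u_{n,\lambda}}=\Law(\sigma_{\min}Z+W\widehat\theta_{n,\lambda})$ conditionally on the sample, i.e.\ a fixed shape $\rho\coloneqq\Law(\sigma_{\min}Z)$ translated by $b_\lambda\coloneqq W\widehat\theta_{n,\lambda}$. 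Because $\mathcal M=m+\ker(A)$ is a closed affine subspace (closed by boundedness of $A$), the nearest-point projection is $\pi(x)=m+\Pi(x-m)$ for any $m\in\mathcal M$. It is globally defined and measurable, so \Cref{ass:measurable_proj} holds and \Cref{cor:flat} applies for every $\lambda$.

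\textbf{Normal error.} Since $d_{\mathcal M}(x)=\|N(x-m)\|$, with $m\in\mathcal M$ chosen freely, we get
\[
\mathcal N(P^{\widehat u_{n,\lambda}})=\E_Z\|N(\sigma_{\min}Z+b_\lambda-m)\|^2=\sigma_{\min}^2\E\|NZ\|^2+\|N(b_\lambda-m)\|^2,
\]
using $\E Z=0$. Take $m\coloneqq\Pi m_\nu+N m_*$, where $m_*$ is such that $N m_\nu=Nm_*$; in fact $m_\nu\in\mathcal M$ because $Am_\nu=\E AY=f$, so we may simply take $m=m_\nu$. Taking expectation over the sample and writing $b_\lambda=\E b_\lambda+(b_\lambda-\E b_\lambda)$ gives a bias term plus $\E\|N(b_\lambda-\E b_\lambda)\|^2$. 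From the closed form in \Cref{thm:linear_linear_exact}, $b_\lambda-\E b_\lambda=\kappa_\sigma WG_\lambda^{-1}W^*(\bar Y_n-m_\nu)$. Its covariance is $n^{-1}\kappa_\sigma^2 M\Sigma_\nu M^*$ with $M=WG_\lambda^{-1}W^*$, so the variance term equals $\frac{\kappa_\sigma^2}{n}\|NM\Sigma_\nu^{1/2}\|_{\mathrm{HS}}^2$. This yields the formula with $K_N=\sigma_{\min}^2\E\|NZ\|^2$.

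\textbf{Tangential error.} The projected law is $\pi_\#P=\Law(m_\nu+\Pi(\sigma_{\min}Z)+\Pi(b_\lambda-m_\nu))$, and $\nu$ is supported on $m_\nu+\ker A$. The key fact is the standard translation identity
\[
W_2^2(\alpha*\delta_c,\beta)=W_2^2(\alpha_0,\beta_0)+\|m_\alpha+c-m_\beta\|^2,
\]
where $\alpha_0,\beta_0$ are the centred versions. It follows by expanding $\|x-y\|^2$ around the means in any coupling: the cross terms vanish because centred marginals have mean zero, so the optimisation over couplings only involves the centred parts. Applying it gives
\[
\mathcal T=W_2^2\big(\Law(\Pi\sigma_{\min}Z),\Law(Y-m_\nu)\big)+\|\Pi(b_\lambda-m_\nu)\|^2 .
\]
The first term, $K_T$, is independent of $\lambda$. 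Taking the expectation and splitting into bias and variance exactly as for $\mathsf N_\lambda$ gives the second formula; integrability follows from the finite second moments.

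\textbf{Consequence.} Since $N+\Pi=I$ with orthogonal ranges, $\|Nh\|^2+\|\Pi h\|^2=\|h\|^2$, and similarly for Hilbert--Schmidt norms column by column. Therefore $\mathsf N_\lambda+\mathsf T_\lambda=K_N+K_T+B_\lambda^2+V_\lambda/n$. The identity $\mathsf N_0-\mathsf N_\lambda-(\mathsf T_\lambda-\mathsf T_0)=(\mathsf N_0+\mathsf T_0)-(\mathsf N_\lambda+\mathsf T_\lambda)$ then gives the claim directly, since the constants cancel.

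The main obstacle I expect is the translation identity in infinite dimensions with tight measures. It needs care in justifying that the cross terms vanish for every coupling, via Bochner integrability, and in handling the expectation of $\mathcal T$ over the random shift. This is mild once the identity is proven pointwise in $b_\lambda$.
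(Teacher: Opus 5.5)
Your proposal is correct and follows essentially the same route as the paper. The explicit affine flow gives a fixed shape $\Law(\sigma_{\min}Z)$ translated by $W\widehat\theta_{n,\lambda}$, and the projection $\pi(x)=m_\nu+\Pi(x-m_\nu)$ splits $\mathcal N$ and $\mathcal T$ into a $\lambda$-independent constant plus a squared shift (via the same centring argument for $\mathcal T$), after which the bias--variance decomposition is applied to each of $N$ and $\Pi$. The only cosmetic difference is in the final identity: you add the two formulas and use $\|Nh\|^2+\|\Pi h\|^2=\|h\|^2$ (column-wise for the Hilbert--Schmidt norms), whereas the paper combines \eqref{eq:decomp_geometric} with the risk formula from the proof of \Cref{thm:linear_linear_exact}; the two are equivalent.
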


Proposition~\ref{prop:connection} shows that both the normal and
tangential errors contain contributions from squared bias and
estimation variance. The constants $K_N$ and $K_T$ reflect the
prescribed distributional shape and cancel when comparing penalty
weights. The geometric criterion of \Cref{sec:geometry} therefore
expresses the same improvement criterion as the bias--variance comparison in \Cref{thm:linear_linear_exact}.

For a finite-dimensional parameter space $\dim(\mathcal E)=p<\infty$, the comparison admits a coordinatewise
representation. Set $U\coloneqq W^*W$,
$V\coloneqq U^{-1/2}W^*A^*AWU^{-1/2}$, and
$\rho_\lambda\coloneqq\lambda\beta_\sigma/\kappa_\sigma$. Here the operator $U$ records how parameters changes translate into changes of the location $W\theta$. After normalising these directions, $V$ measures their effect on the constraint residual. The effective penalty weight $\rho_{\lambda}$ describes the strength of the penalty relative to the flow matching loss. Let $(e_j)_{j=1}^p$ be an orthonormal eigenbasis of $V$, with
$Ve_j=d_je_j$. Define
$g_j\coloneqq
\left\langle
U^{-1/2}W^*A^*(f-AWU^{-1}W^*m_\nu),
e_j
\right\rangle_{\mathcal E}$
and
$\gamma_j\coloneqq
\left\langle
U^{-1/2}W^*\Sigma_\nu WU^{-1/2}e_j,
e_j
\right\rangle_{\mathcal E}$. The eigenvalue $d_j$ measures how strongly the constraint residual
responds to a displacement in the corresponding normalised
direction. The quantity $\gamma_j/n$ is the variance of the
unpenalised location estimate in that direction.
Finally, $g_j$ describes the extent to which the penalty pushes
the unpenalised population fit in that direction. Although the target satisfies the constraint, its projection onto
the chosen location class need not do so, so $g_j$ can be nonzero. The following proposition
determines when reducing sampling variability outweighs the
additional bias caused by this displacement.

\begin{proposition}\label{prop:eigenbasis_euclidean_space}
Under the notation above, for every $\lambda>0$,
$\Risk_n(\lambda)<\Risk_n(0)$ 
if and only if
\[
\sum_{j=1}^p
\frac{\rho_\lambda g_j^2}
     {(1+\rho_\lambda d_j)^2}
<
\frac1n
\sum_{j=1}^p
\frac{(2d_j+\rho_\lambda d_j^2)\gamma_j}
     {(1+\rho_\lambda d_j)^2}.
\]
\end{proposition}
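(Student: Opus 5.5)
The plan is to start from the equivalence \eqref{eq: equivalence} in \Cref{thm:linear_linear_exact} and the identity $B_\lambda^2-B_0^2=\|W(\theta_\lambda-\theta_0)\|_{\mathcal H}^2$ stated after it. Both sides of \eqref{eq: equivalence} will then be evaluated in the eigenbasis $(e_j)$ of $V$.

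\textbf{Reparametrisation.} Since $U=W^*W\ge c_W^2 I_{\mathcal E}$, the operator $U^{-1/2}$ exists and is bounded. Put $Q\coloneqq WU^{-1/2}$, so that $Q^*Q=I_{\mathcal E}$ and $Q$ is an isometry. Then
\[
G_\lambda=\kappa_\sigma U^{1/2}(I+\rho_\lambda V)U^{1/2},
\]
and therefore
\[
WG_\lambda^{-1}W^*=\kappa_\sigma^{-1}Q(I+\rho_\lambda V)^{-1}Q^*.
\]
Here $I+\rho_\lambda V$ is invertible because $V\ge0$, so $d_j\ge0$. Note also $\rho_\lambda>0$ for $\lambda>0$: $w>0$ a.e.\ and $(1-t)/\sigma_t>0$ on $[0,1)$ give $\beta_\sigma>0$.

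\textbf{Variance term.} Using that $Q$ is an isometry and $(I+\rho_\lambda V)^{-1}$ is diagonal in $(e_j)$,
\[
V_\lambda=\Tr\bigl((I+\rho_\lambda V)^{-2}Q^*\Sigma_\nu Q\bigr)=\sum_{j=1}^p\frac{\gamma_j}{(1+\rho_\lambda d_j)^2},
\]
since $\langle Q^*\Sigma_\nu Qe_j,e_j\rangle=\gamma_j$. Hence
\[
V_0-V_\lambda=\sum_{j=1}^p\gamma_j\frac{2\rho_\lambda d_j+\rho_\lambda^2d_j^2}{(1+\rho_\lambda d_j)^2}.
\]

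\textbf{Bias term.} Write $\varphi_\lambda\coloneqq U^{1/2}\theta_\lambda$. Taking expectations in the formula for $\widehat\theta_{n,\lambda}$ gives
\[
\varphi_\lambda=(I+\rho_\lambda V)^{-1}\bigl(Q^*m_\nu+\rho_\lambda U^{-1/2}W^*A^*f\bigr),
\qquad \varphi_0=Q^*m_\nu.
\]
Subtracting, and using $VQ^*m_\nu=U^{-1/2}W^*A^*AWU^{-1}W^*m_\nu$, we get
\[
\varphi_\lambda-\varphi_0=\rho_\lambda(I+\rho_\lambda V)^{-1}U^{-1/2}W^*A^*\bigl(f-AWU^{-1}W^*m_\nu\bigr).
\]
Since $W(\theta_\lambda-\theta_0)=Q(\varphi_\lambda-\varphi_0)$ and $Q$ is an isometry,
\[
B_\lambda^2-B_0^2=\rho_\lambda^2\sum_{j=1}^p\frac{g_j^2}{(1+\rho_\lambda d_j)^2}.
\]

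\textbf{Conclusion.} Insert these two expressions into \eqref{eq: equivalence} and divide by $\rho_\lambda>0$; this yields exactly the stated criterion. The only step needing care is the algebraic identity for $\varphi_\lambda-\varphi_0$, namely checking that $Q^*m_\nu$ combines with the penalty term into the displayed residual form. The rest is diagonalisation in finite dimension.
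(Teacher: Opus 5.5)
Your proposal is correct and follows essentially the same route as the paper: the isometry $WU^{-1/2}$ and the factorisation $G_\lambda=\kappa_\sigma U^{1/2}(I_{\mathcal E}+\rho_\lambda V)U^{1/2}$, then diagonalising both the bias shift and the variance trace in the eigenbasis of $V$, and finally dividing the criterion of \Cref{thm:linear_linear_exact} by $\rho_\lambda$. The only differences are minor refinements. You check explicitly that $\rho_\lambda>0$ and $d_j\geq 0$, and you invoke the Pythagorean identity $B_\lambda^2-B_0^2=\|W(\theta_\lambda-\theta_0)\|_{\mathcal H}^2$ instead of the paper's direct orthogonality argument; your tracking of the factor $\rho_\lambda^2$ in the bias term is also cleaner than the paper's.
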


In the normalised coordinates $U^{1/2}\theta$, penalisation
multiplies sampling fluctuations in direction $e_j$ by
$(1+\rho_\lambda d_j)^{-1}$, while shifting the population fit
by $\rho_\lambda g_j/(1+\rho_\lambda d_j)$.
The criterion compares the resulting variance reduction with
the additional squared bias, summed over all directions.
Directions with $d_j=0$ remain unchanged. Penalisation is therefore
beneficial when it suppresses substantial sampling variability
without displacing the population fit too far.

For any fixed sample size and noise schedule, a sufficiently small
positive penalty improves accuracy whenever
$\sum_{j=1}^p d_j\gamma_j>0$. In this case, the variance reduction
is first order in $\rho_\lambda$, whereas the increase in squared
bias is second order. 

\subsection{Arbitrary residuals}
\label{sec:affine}
  For $\lambda = 0$, \Cref{thm:linear_linear_exact} reveals that the flow matching estimator recovers the linear regression estimator $\widehat{\theta}_{n,0}=(W^* W)^{-1} W^* \bar{Y}_n$ on a vector field level. Moreover, the generated distributions have the same shape and differ only through the location parameter $W\theta$. This yields an explicit comparison of their $2$-Wasserstein errors.

\begin{proposition}\label{thm:vt_affine}
Under \Cref{ass:basic} and \Cref{ass:vector_field_affine}, let
$\bar Y_n\coloneqq n^{-1}\sum_{i=1}^nY_i$. For each $\theta\in\mathcal E$, let $u^\theta$ denote the affine
velocity field defined in \Cref{ass:vector_field_affine}. Then
\begin{equation}\label{eq:Fn-affine}
F_n(u^\theta)
=
\kappa_\sigma\|W\theta-\bar Y_n\|_{\mathcal H}^2+C_n,
\end{equation}
where $C_n$ does not depend on $\theta$. 
Finally, the distribution in \eqref{eq:Pu} is
$P^{u^\theta}=\Law(W\theta+\sigma_1Z)$, and
\begin{equation}\label{eq:W2-affine}
W_2^2(P^{u^\theta},\nu)
=
\|W\theta-\mathbb{E}[Y]\|_{\mathcal H}^2+K_\sigma,
\end{equation}
where $K_\sigma\coloneqq
W_2^2(\Law(\sigma_1Z),(x\mapsto x-\mathbb{E}[Y])_\#\nu)$ is independent of
$\theta$. 
\end{proposition}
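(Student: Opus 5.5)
The proof splits into two independent computations: the flow-matching loss for the affine class, and the pushforward distribution together with its Wasserstein distance to $\nu$.

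\emph{The loss $F_n(u^\theta)$.} I will work from the explicit representation \eqref{eq:emp-reg}. For $x=tY_i+\sigma_tZ$ the field evaluates to
\[
u^\theta_t(x)=\tfrac{\sigma_t'}{\sigma_t}(tY_i+\sigma_tZ)+\bigl(1-t\tfrac{\sigma_t'}{\sigma_t}\bigr)W\theta .
\]
Subtracting the target $Y_i+\sigma_t'Z$, the $Z$-terms cancel exactly. Since $t\sigma_t'/\sigma_t-1=-(1-t\sigma_t'/\sigma_t)$, the residual is the deterministic vector
\[
\bigl(1-t\tfrac{\sigma_t'}{\sigma_t}\bigr)(W\theta-Y_i).
\]
Hence $F_n(u^\theta)=\kappa_\sigma\, n^{-1}\sum_i\|W\theta-Y_i\|^2-c_n$. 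The bias--variance identity for the empirical mean, $n^{-1}\sum_i\|W\theta-Y_i\|^2=\|W\theta-\bar Y_n\|^2+n^{-1}\sum_i\|Y_i-\bar Y_n\|^2$, then gives \eqref{eq:Fn-affine} with $C_n=\kappa_\sigma n^{-1}\sum_i\|Y_i-\bar Y_n\|^2-c_n$, which does not depend on $\theta$. I must also check that $u^\theta$ is admissible, i.e.\ that \eqref{eq:emp-reg} applies to it. The field is affine in $x$ with coefficients continuous on $[0,1]$, because $\sigma_t\ge\sigma_{\min}>0$, so Assumption~\ref{ass:flows} holds with constant $L_u$.

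\emph{The generated law $P^{u^\theta}$.} The ODE \eqref{eq:ode_flow} is linear, and I will solve it with the ansatz $\gamma_q=\sigma_q z+q\,W\theta$. Differentiating gives $\sigma_q'z+W\theta$. Evaluating the field at this point gives
\[
\tfrac{\sigma_q'}{\sigma_q}(\sigma_qz+qW\theta)+W\theta-q\tfrac{\sigma_q'}{\sigma_q}W\theta=\sigma_q'z+W\theta,
\]
so the two agree. The initial value is $\gamma_0=\sigma_0z=z$. By the uniqueness of solutions already established under Assumption~\ref{ass:flows}, this is the flow, so $\Phi^{u^\theta}_{0\to1}(z)=W\theta+\sigma_1z$ and $P^{u^\theta}=\Law(W\theta+\sigma_1Z)$.

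\emph{The Wasserstein distance.} This is the step that needs the most care. I will use the translation property of $W_2$: for $\mu,\rho\in\Ptwo(\mathcal H)$ with means $m_\mu,m_\rho$,
\[
W_2^2(\mu,\rho)=\|m_\mu-m_\rho\|^2+W_2^2(\bar\mu,\bar\rho),
\]
where $\bar\mu,\bar\rho$ are the centred versions. To prove it, take any coupling $(X,X')$ and expand
\[
\|X-X'\|^2=\|(X-m_\mu)-(X'-m_\rho)\|^2+2\ip{X-m_\mu-(X'-m_\rho),\,m_\mu-m_\rho}+\|m_\mu-m_\rho\|^2 .
\]
The cross term has zero Bochner expectation. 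Since couplings of $(\mu,\rho)$ correspond bijectively to couplings of the centred laws via translation, taking the infimum yields the identity; existence of optimal couplings is not needed. Here $P^{u^\theta}$ has mean $W\theta$ because $\E Z=0$, and its centred version is $\Law(\sigma_1Z)$, which is already centred; $\nu$ has mean $\E Y$. This gives \eqref{eq:W2-affine} with $K_\sigma$ as stated, which does not depend on $\theta$.
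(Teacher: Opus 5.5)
Your proof is correct and follows essentially the same route as the paper. The paper likewise reduces $F_n(u^\theta)$ to $\kappa_\sigma n^{-1}\sum_i\|W\theta-Y_i\|^2$ (reusing the computation in the proof of \Cref{thm:linear_linear_exact}) and splits off the empirical variance; it solves the linear ODE explicitly to get $\Phi^{u^\theta}_{0\to1}(z)=W\theta+\sigma_{\min}z$; and it uses the mean/centred-part decomposition of $W_2^2$, which it cites from Peyr\'e--Cuturi (Remark~2.19, extended to Hilbert spaces) rather than proving it as you do, while your bookkeeping is slightly more careful in absorbing the $\theta$-independent constant $-c_n$ into $C_n$, which the paper's computation drops.
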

\Cref{thm:vt_affine} reduces the empirical flow matching objective to a
linear least-squares problem in $\theta$. We next compare the penalised and unpenalised solutions within the class
$\mathcal V$, in the same spirit as the geometric comparison of
\Cref{sec:geometry}. Compared to \Cref{thm:linear_linear_exact}, we do not impose any additional assumption on the residual.

\begin{corollary}\label{cor:linear_comparison}
Let $\widehat\theta_{n,\lambda}$ be any minimiser of \eqref{eq:estimator} in the setting of \Cref{ass:vector_field_affine} and define
$
 \delta_{n,\lambda}=\widehat\theta_{n,\lambda}-\widehat\theta_{n,0}.
$
Then
\begin{equation*}
\Risk_n(\lambda)<\Risk_n(0) \quad\text{if and only if}\quad  -2\E\ip{W\widehat{\theta}_{0, \lambda} -  \mathbb{E}[Y],W\delta_{n,\lambda}}>
 \E\norm{W\delta_{n,\lambda}}^2.
\end{equation*}
\end{corollary}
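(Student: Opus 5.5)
The argument rests on the exact risk formula \eqref{eq:W2-affine} from \Cref{thm:vt_affine}. Every minimiser $\widehat\theta_{n,\lambda}$ of \eqref{eq:estimator} lies in the class $\mathcal V$, so for each realisation of the sample
\[
W_2^2\bigl(P^{u^{\widehat\theta_{n,\lambda}}},\nu\bigr)=\|W\widehat\theta_{n,\lambda}-\E[Y]\|_{\mathcal H}^2+K_\sigma ,
\]
where $K_\sigma$ depends neither on $\theta$ nor on the sample. Taking expectations over $Y^{(n)}$ gives $\Risk_n(\lambda)=\E\|W\widehat\theta_{n,\lambda}-\E[Y]\|^2+K_\sigma$. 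The same identity holds for $\lambda=0$. The constant therefore cancels in $\Risk_n(\lambda)-\Risk_n(0)$, and the comparison reduces to a pure mean-estimation comparison. No assumption on $R$ beyond \Cref{ass:basic} is used here, because the penalty only affects which $\theta$ is selected, not the risk formula.

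Next, write $W\widehat\theta_{n,\lambda}=W\widehat\theta_{n,0}+W\delta_{n,\lambda}$ and expand the square:
\[
\|W\widehat\theta_{n,\lambda}-\E[Y]\|^2=\|W\widehat\theta_{n,0}-\E[Y]\|^2+2\ip{W\widehat\theta_{n,0}-\E[Y],W\delta_{n,\lambda}}+\|W\delta_{n,\lambda}\|^2 .
\]
Taking expectations and subtracting, $\Risk_n(\lambda)<\Risk_n(0)$ holds exactly when $2\E\ip{W\widehat\theta_{n,0}-\E[Y],W\delta_{n,\lambda}}+\E\|W\delta_{n,\lambda}\|^2<0$. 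Rearranged, this is the stated inequality. The $\widehat\theta_{0,\lambda}$ in the statement must be read as the unpenalised estimator $\widehat\theta_{n,0}$.

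The main obstacle is justifying the expectations: each term must be integrable so that linearity of expectation applies and no $\infty-\infty$ arises. For $\lambda=0$, \Cref{thm:vt_affine} and $W^*W\ge c_W^2 I$ give $\widehat\theta_{n,0}=(W^*W)^{-1}W^*\bar Y_n$. This has finite second moment because $\nu\in\Ptwo(\mathcal H)$. For general $\lambda$, I would first note that the equivalence is trivially true or vacuous when $\Risk_n(\lambda)=\infty$, reading both sides as extended reals. When $\Risk_n(\lambda)<\infty$, the quantity $\E\|W\widehat\theta_{n,\lambda}\|^2$ is finite, hence so is $\E\|W\delta_{n,\lambda}\|^2$, and Cauchy–Schwarz makes the cross term integrable. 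A measurable selection of the minimiser is implicitly assumed, as it is already needed for $\Risk_n$ to be defined. With these points settled, the proof consists of the expansion above.
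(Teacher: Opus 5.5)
Your proof is correct and follows the paper's argument: you cancel the $\theta$-independent constant $K_\sigma$ from \eqref{eq:W2-affine}, expand $\|W\widehat\theta_{n,\lambda}-\E[Y]\|^2$ around $W\widehat\theta_{n,0}$, and rearrange, correctly reading $\widehat\theta_{0,\lambda}$ as $\widehat\theta_{n,0}$. Your integrability and measurable-selection remarks are a useful addition; the paper's proof leaves them implicit.
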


\Cref{cor:linear_comparison} provides a geometric characterisation when penalisation improves generation accuracy. 

The component of the displacement induced by the penalty in the
direction of the population mean must dominate the quadratic cost of that
displacement. In particular, components orthogonal to
$\mathbb{E}[Y]-W\widehat\theta_{n,0}$ can only increase the risk.

\section{Penalised flow matching in nonlinear hypothesis classes with fixed features}\label{sec:last_layer}
In this subsection, we allow the velocity field to depend
nonlinearly on the spatial variable. We represent this dependence
through fixed nonlinear features and estimate their output
coefficients. This includes shallow neural networks whose hidden weights and biases are fixed before training, while only the outer weights are learned. Choosing the hidden parameters randomly according to a prescribed distribution gives a random-feature model \citep{rahimi2008weighted}.

We use this
setting as a first nonlinear extension of \Cref{ass:vector_field_affine}
for approximating the conditional expectation in
\eqref{eq:v_t_with_expectation} while keeping the optimisation explicit. For ease of notation, we impose a linear assumption on the residual.
\begin{assumption}\label{ass:residual_linear}
    The residual is given by $R(x)=Ax$, where $A:\mathbb R^d\to\Yres$ is linear. Further it holds that $AY=0$ almost surely for $Y\sim\nu$.
\end{assumption}
Under \Cref{ass:residual_linear}, $\mathcal M=\ker(A)$. Let
$\Pi$ be the orthogonal projection onto $\ker(A)$ and set
$N\coloneqq I_{\mathbb R^d}-\Pi$. Decomposing the input with these projections enables the following choice of the set $\mathcal{V}$.

\begin{assumption}\label{ass:vector_field_features}
Let $\phi:\ker(A)\to\mathbb R^p$ and
$\psi:\ker(A)^\perp\to\mathbb R^q$ be fixed Borel feature maps with finite Lipschitz constant, with
$\psi(ar)=a\psi(r)$ for every $a\geq0$. We choose the velocity field class
\[
\mathcal V
\coloneqq
\left\{
u^{B,D}:
u_t^{B,D}(x)
=
\frac{\sigma_t'}{\sigma_t}x
+
\left(1-t\frac{\sigma_t'}{\sigma_t}\right)
\bigl(B\phi(\Pi x)+D\psi(Nx)\bigr)
\right\},
\]
where $B:\mathbb R^p\to\ker(A)$ and
$D:\mathbb R^q\to\ker(A)^\perp$ are linear.     
\end{assumption}
The decomposition through $\Pi$ and $N$ separates the component in
$\ker(A)$ from its orthogonal complement. The matrices $B$ and $D$ act on
these two components separately. Compared to
\Cref{ass:vector_field_affine}, the dependence on the state is nonlinear
through $\phi$ and $\psi$, while the optimisation remains linear in the
trainable coefficients. In particular, if $\phi$ and $\psi$ apply the ReLU function $\sigma(x)=\max\{x,0\}$ componentwise, then 
\[
B\phi(\Pi x)+D\psi(Nx)
=B\sigma(\Pi x)+D\sigma(Nx)
\]
is a vector-valued one-hidden-layer ReLU network with fixed hidden weight matrix $\Pi$ and $N$ and trainable output coefficients $B$ and $D$.

For the residual term, we approximate the flow from time $t$ to $1$ by one
explicit Euler step,
\begin{equation}\label{eq:euler_step}
\widetilde\Phi_{t\to1}^{B,D}(x)
\coloneqq
x+(1-t)u_t^{B,D}(x).
\end{equation}
Thus, throughout this section we replace \eqref{eq:Jn} with
\begin{equation}\label{eq:Jn_euler}
    \tilde{J}_n(u)\coloneqq
 \int_0^1\int_{\mathcal H}
 \|R(\widetilde\Phi_{t\to1}^{B,D}(x))\|_{\Yres}^2
 \dd \mu_t^n(x)\dd \eta(t).
\end{equation}
 The generated distribution
used in the risk is still defined through the solution of the corresponding
ODE. For $\phi_{t,i}\coloneqq\phi(tY_i+\sigma_t\Pi Z)$, we impose the following
conditions.

\begin{assumption}\label{ass:last_layer_optimization}
The expectations below are finite and
\[
\frac1n\sum_{i=1}^n
\int_0^1
\frac{w(t)}{\sigma_t^2}
\E_Z[\phi_{t,i}\phi_{t,i}^*]\dd t
\succ0,
\qquad
\E_Z[\psi(NZ)\psi(NZ)^*]\succ0.
\]
\end{assumption}

The next result shows that a sufficiently small penalty can strictly improve
the $2$-Wasserstein risk.

\begin{theorem}\label{thm:last_layer_improvement} Grant \Cref{ass:residual_linear}, \ref{ass:vector_field_features} and \ref{ass:last_layer_optimization} and assume that
$\E_Z[NZ\psi(NZ)^*]\neq0$. Then there exists $c>0$ such that, as
$\lambda\downarrow0$,
\[
\Risk_n(\lambda)
=
\Risk_n(0)
-
c\,\lambda
+
\mathcal O(\lambda^2).
\]
Consequently, for every $n$ there exists $\lambda_n^*>0$ such that
$\Risk_n(\lambda)<\Risk_n(0)$ for every $0<\lambda<\lambda_n^*$.
\end{theorem}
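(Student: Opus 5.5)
The plan is to exploit the block structure of the class in \Cref{ass:vector_field_features}. Under \Cref{ass:residual_linear} the penalty and the normal part of the generated law both depend only on $D$, while $B$ is unaffected by $\lambda$. Since $\mathcal M=\ker(A)$ is a closed linear subspace, \Cref{cor:flat} with $\pi=\Pi$ gives $W_2^2(P,\nu)=\mathcal N(P)+\mathcal T(P,\nu)$. I will show that $\mathcal T$ does not depend on $\lambda$ and that $\mathcal N$ is a deterministic, smooth function of $\lambda$ with strictly negative derivative at $0$.

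\textbf{Step 1: the objective decouples.} Use $1-t\sigma_t'/\sigma_t=1/\sigma_t$ and the fact that $\Pi$ and $N$ are orthogonal. Then $\|u_t^{B,D}(x)-V\|^2$ splits into a $\Pi$-part involving only $B$ and an $N$-part involving only $D$, because $B$ maps into $\ker(A)$ and $D$ into $\ker(A)^\perp$.

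On training states $x=tY_i+\sigma_tZ$ we have $NY_i=0$, so $Nx=\sigma_tNZ$. Homogeneity of $\psi$ then gives $\psi(Nx)=\sigma_t\psi(NZ)$. The $N$-component of $u_t-V_t^n$ is
\[
\tfrac{\sigma_t'}{\sigma_t}\sigma_tNZ+\tfrac1{\sigma_t}\sigma_tD\psi(NZ)-\sigma_t'NZ=D\psi(NZ),
\]
so the $N$-part of the flow matching loss equals $\E\|D\psi(NZ)\|^2$.

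For the penalty, a direct computation gives $\sigma_t+(1-t)\sigma_t'=\sigma_{\min}$. Hence $A\widetilde\Phi^{B,D}_{t\to1}(x)=A\bigl(\sigma_{\min}NZ+(1-t)D\psi(NZ)\bigr)$, which is independent of $B$ and of the data.

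It follows that $\widehat B$ is the $\lambda$-independent least-squares solution, unique by the first part of \Cref{ass:last_layer_optimization}. The coefficient $D_\lambda$ minimises a strictly convex quadratic: $\Gamma\coloneqq\E[\psi(NZ)\psi(NZ)^*]\succ0$, and this is perturbed by a positive semidefinite term of order $\lambda$. So $D_0=0$. Writing $M\coloneqq\E[NZ\psi(NZ)^*]$ and $m_1\coloneqq\int_0^1(1-t)w(t)\dd t>0$, the first-order condition gives
\[
D_\lambda=-\lambda\sigma_{\min}m_1A^*AM\Gamma^{-1}+\mathcal O(\lambda^2).
\]
In particular, $D_\lambda$ is deterministic.

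\textbf{Step 2: the generated law.} The ODE \eqref{eq:ode_flow} for $u^{\widehat B,D}$ splits in the same way. The $\Pi$-component evolves on its own, driven by $\widehat B\phi(\Pi x)$, so $\Pi\Phi_{0\to1}(Z)$, and with it $\mathcal T=W_2^2(\Pi_\#P,\nu)$, is independent of $\lambda$.

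For the $N$-component, set $y_t\coloneqq N\gamma_t/\sigma_t$. Homogeneity of $\psi$ gives
\[
\dot y_t=\sigma_t^{-1}D\psi(y_t),\qquad y_0=NZ.
\]
A Gr\"onwall argument using the Lipschitz bound on $\psi$ then yields
\[
y_1=NZ+c_\sigma D\psi(NZ)+\mathcal O(\|D\|^2(1+\|NZ\|)),\qquad c_\sigma=\int_0^1\sigma_t^{-1}\dd t>0.
\]
Since $\sigma_1=\sigma_{\min}$, this gives
\[
\mathcal N(P^{\widehat u_{n,\lambda}})=\sigma_{\min}^2\Bigl(\E\|NZ\|^2+2c_\sigma\ip{M,D_\lambda}_{\mathrm{HS}}+\mathcal O(\|D_\lambda\|^2)\Bigr).
\]

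\textbf{Step 3: conclusion.} Substituting $D_\lambda$ gives $\Risk_n(\lambda)-\Risk_n(0)=-c\lambda+\mathcal O(\lambda^2)$, with
\[
c=2c_\sigma\sigma_{\min}^3m_1\Tr\bigl(\Gamma^{-1/2}M^*A^*AM\Gamma^{-1/2}\bigr).
\]
This constant is strictly positive. The range of $M$ lies in $\ker(A)^\perp$, where $A^*A$ is positive definite, and $M\Gamma^{-1/2}\neq0$ because $M\neq0$ by assumption. The existence of $\lambda_n^*$ follows immediately; in fact the expansion does not depend on $n$.

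The main obstacle is Step 2. The second-order remainder of the nonlinear ODE in $D$ must be controlled in $L^2$, uniformly for small $\|D\|$. I would attempt this through the integral equation for $y_t$, using the Lipschitz property of $\psi$ together with $\psi(0)=0$ (from homogeneity) to bound $\|y_t-NZ\|\lesssim\|D\|\,\|NZ\|$. One further point needs checking: the $\Pi$-dynamics must be genuinely independent of $Nx$. This holds because $\phi$ is evaluated at $\Pi x$ only.
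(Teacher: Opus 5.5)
Your proposal is correct and follows essentially the same route as the paper. The objective splits into a $\lambda$-independent problem for $B$ and an explicit quadratic problem for $D$, and \Cref{cor:flat} reduces the risk difference to the change in the normal error. The rescaled normal dynamics $N\gamma_t/\sigma_t$ are then expanded to first order in $\lambda$, with a Gr\"onwall bound on the remainder.

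The differences are cosmetic. The paper also reparametrises time by $s(t)=\int_0^t\sigma_\tau^{-1}\dd\tau$, so your $c_\sigma$ is its $J_\sigma$, and it keeps the exact resolvent form of $D_\lambda$. It arrives at the same constant $c=2c_\sigma\sigma_{\min}^3 m_1\|AM\Gamma^{-1/2}\|_{\mathrm{HS}}^2$.
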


The improvement for small $\lambda$ comes from the penalty acting on the dynamics perpendicular
to the constraint space. Without penalisation, the fitted normal
coefficient is $D=0$, so the normal component of a generated sample
is $\sigma_{\min}NZ$. Although the observations satisfy the
constraint exactly, generation therefore retains noise outside
the constraint space.

The penalty introduces a correction through $D$, while leaving
the fitted tangential coefficient $B$ unchanged. The observations $Y^{(n)}$ on the other hand only act on $B$. The only components influencing both, $B$ and $D$ are the latent distribution and the noise schedule $\sigma_t$. This interaction is enough such that for larger $\lambda$, the penalty is not necessarily beneficial, but might impact the distribution estimation negatively. We note that in this setting, the precise effect can be calculated explicitly for a given choice of $\mu_0$, $\sigma_{\min}$, $A$ and $\psi$. \Cref{thm:last_layer_improvement} is stated for fixed $\sigma_{\min}$, but the proof can also be used to align $\lambda$ and $\sigma_{\min}$.

\section{Discussion and limitations}\label{sec:discussion}
Our work provides a first theoretical underpinning of penalised flow matching and identifies conditions under which explicit penalties can lead to improved generation performance. \Cref{sec:geometry} provides geometric conditions that decompose the overall $2$-Wasserstein performance measure in a normal error, a tangential error and a remainder as given in \Cref{def:errors}. For affine constraint sets, generation improves precisely when the reduction in the normal outweighs the increase in the tangential error. For curved sets, an additional remainder accounts on their interaction, and bounds on this remainder yield sufficient conditions for improvement.

 In the mean-estimation setting of \Cref{sec:estimator_based_analysis}, improved generation occurs when the reduction in estimation variance exceeds the increase in squared bias. For a fixed noise schedule, a fixed penalty that introduces a strictly positive bias eventually becomes harmful as the sample size increases. This highlights the potential benefit of explicit penalties in low-sample regimes, particularly relevant for scientific applications in which training data require costly numerical simulations \citep{peherstorfer2018survey}.

 \Cref{sec:last_layer} establishes a different improvement mechanism
for a nonlinear model with fixed features. Within the chosen architecture, a sufficiently small penalty reduces the remaining noise perpendicular to the constraint space while leaving the normal components separate and thus unchanged. This improvement holds for
every fixed sample size under the stated assumptions.

Together, these results show that the effect of a constraint penalty
depends on its interaction with the velocity-field class, the noise schedule and the available observations. In particular, the smoothing
induced by flow matching can itself leave samples outside the constraint set, such that an explicit penalty can lead to improved generation. On the other hand, the effect of the penalty are highly involved. Considered use requires a carefully assessed cost-benefit analysis.

This work is only the first step in a profound understanding of explicit penalties within generative models and provides several interesting directions for further research: 
\begin{itemize}
\item A very natural extension is to analyse settings in which information is imperfect, e.g., observations are contaminated and thus fulfil the constraint only approximately or the residual map itself underlies some uncertainty. In these regimes penalties might either compensate for additional noise or worsen the generation performance by steering the data towards a wrong constraint set. Another very interesting case are penalties that the newly generated samples should fulfil, but the available training data do not. This could be particularly interesting for, e.g., enforcing moral or legal restrictions on outputs. In view of the dangers of uncontrolled AI output, this question will become increasingly relevant. Here, the major challenge is how to formalise such a constraint.  

    \item Neural networks are known to learn data structures implicitly such that explicit penalties appear less beneficial in general. However, this only holds when the network architecture is carefully chosen and adapts to the data structure on which the networks are trained. Reversing the perspective implies that penalties might compensate for badly chosen architectures that are unable to capture the underlying data structure. Embedding structural information in the penalty might positively influences feature learning.    

    \item Constraint penalties may also improve statistical accuracy by reducing overfitting. Such benefits were already observed in the context of physics-informed neural networks (PINNs) \citep{doumeche2024kernel} and have been empirically proven for physics-informed diffusion models \citep{bastek2025physics}. However, explicit penalties such as physical laws alone need not prevent overfitting. The analysis of PINNs by \citet{doumeche2025convergence} shows that additional regularisation, independent of the specific constraint, can be necessary. In generative modelling, a model may likewise reproduce admissible training samples without accurately recovering the underlying distribution.

    \item While our results do not provide any statistical rates yet, the bias-variance decomposition in \Cref{thm:linear_linear_exact} provides the starting point. Following the results known for vanilla diffusion \citep{oko2023diffusion} similar statistical rates could be derived under structural assumptions such as factorisable densities \citep{kwon2026factorizable} or manifold support \citep{azangulov2024}. For penalised generative models we would expect faster convergence rates for small-sample regimes.
    
    \item  While we assume a general time distribution $\eta$, studying optimal choices of this distribution is particularly relevant for penalised settings, since the precision of the penalty evaluation during training heavily depends on the starting point for a fixed Euler scheme. Even in the unpenalised flow matching setting, it has been observed that the uniform distribution is not always the best choice \citep{esser2024scaling}. Thus, a rigorous analysis could lead to a significantly improved performance. 

    \item  While our geometric comparison of \Cref{sec:geometry} can assess outputs from any generative method (given the constraint assumption is fulfilled), \Cref{sec:estimator_based_analysis} focuses on penalties applied to the training objective, inspired by physics-informed diffusion models or flow matching  \citep{baldan2026physics, bastek2025physics}. Alternative approaches suggest to perform generation in transformed coordinates and map the generated samples back through a function whose outputs necessarily satisfy the constraint \citep{NEURIPS2023_85f5c737}. Constrained generative dynamics define the process directly on a constraint set or introduce reflections at the boundaries of an admissible domain \citep{ICLR2024_d1f9936d, pmlr-v202-lou23a}, while generator guidance steers sampling updates using gradients of the constraint during training \citep{Bansal_2023_CVPR}. To the best of our knowledge, theoretical underpinning of any of these methods is still lacking, limiting a direct comparison of which method to choose.

\end{itemize}

\paragraph{Acknowledgement.} The work of L. Kunkel and S. Langer was partially supported by the Deutsche Forschungsgemeinschaft (DFG) under project number
460867398 (DFG {\it Research unit 5381 Mathematical Statistics in the Information Age}).

\appendix

\section{Proofs}
\paragraph{Notation.} Random variables are defined on a common probability space $(\boldsymbol{\Omega},\mathcal A,\mathbb P)$. For a map $f:D\to\mathcal H$, where $D\subseteq\mathcal H$ and
$\mathcal H$ is a Hilbert space, we denote its Lipschitz constant by
\[
\operatorname{Lip}(f)
\coloneqq
\sup_{\substack{x,y\in D\\x\neq y}}
\frac{\|f(x)-f(y)\|_{\mathcal H}}{\|x-y\|_{\mathcal H}}.
\]
The map $f$ is Lipschitz continuous if
$\operatorname{Lip}(f)<\infty$.

\subsection{Proofs of \Cref{sec:fm}}\label{sec:proofs_fm}
\begin{proof}[Proof of \Cref{prop:fm}]
This first claim corresponds to \citet[Appendix A.1]{zhang2026flow}, which in turn relies on \citet[Proposition 1.10]{Da_Prato_Zabczyk_2014} and \citet[Lemma 1.14]{Kallenberg2021-jb}. We find that
\begin{equation}\label{eq:V_t_finite}
    \mathbb{E}[\|V_t\|_{\mathcal{H}}^2] =    \mathbb{E}[\|Y + \sigma_t^{\prime Z}\|_{\mathcal{H}}^2] = \mathbb{E}[\|Y\|_{\mathcal{H}}^2] + (1- \sigma_{\min})^2 \Tr(C) < \infty
\end{equation}
and similarly
\begin{equation*}
    \mathbb{E}\|X_t\|_{\mathcal{H}}^2=t^2 \mathbb{E}\|Y\|_{\mathcal{H}}^2+\sigma_t^2 \Tr(C)<\infty.
\end{equation*}
Therefore $\mathbb{E}[V_t|X_t]$ exists.
For a fixed $t$, we obtain by \citet[Lemma 1.14]{Kallenberg2021-jb} that there exists a Borel function $v_t \colon \mathcal{H} \rightarrow \mathcal{H}$ such that
\begin{equation}\label{eq:kallenberg_existence}
    v_t(X_t) = \mathbb{E}[V_t|X_t].
\end{equation}
Next, we introduce an additional uniformly distributed random variable $T_0 \sim \operatorname{Unif}([0,1])$, independent of $Y$ and $Z$. To do so, we extend the underlying probability space $(\Omega, \mathcal{A}, \mathbb{P})$ to the product space
$
(\boldsymbol{\Omega} \times[0,1], \mathcal{A} \otimes \mathcal{B}([0,1]), \mathbb{P} \otimes \lambda),
$
where $\lambda$ denotes Lebesgue measure on $[0,1]$ and $\mathcal{B}([0,1])$ is the Borel sigma-algebra on $[0,1]$. On this enlarged space, we define
$
T_0(\omega, s)=s
$
and extend $Y$ and $Z$ by setting $Y(\omega, s)=Y(\omega)$ and $Z(\omega, s)=Z(\omega)$. This preserves the joint law of $(Y, Z)$ and makes $T_0$ independent of $(Y, Z)$. Then $(T_0, X_{T_0})$ and $V_{T_0}$ are $\mathcal{A} \otimes \mathcal{B}([0,1])$-measurable random variables taking values in $[0,1] \times \mathcal{H}$ and $\mathcal{H}$, respectively, equipped with their Borel sigma-algebras. By the same arguments leading to the existence of the conditional expectation and the Borel function in \eqref{eq:kallenberg_existence}, we conclude that there is a map $v \colon [0,1] \times \mathcal{H}  \rightarrow \mathcal{H}$ such that
\begin{equation}\label{eq:v_t_unif}
    v(T_0, X_{T_0}) = \mathbb{E}[V_{T_0}|T_0, X_{T_0}], \qquad \text{ almost surely}.
\end{equation}
Then we obtain 
\begin{equation*}
    \int_0^1 \int_{\mathcal{H}}\|v(t, x)\|_{\mathcal{H}}^2 \dd \mu_t( x) \dd t < \infty.
\end{equation*}
Next we want to verify that $v_t(X_t) = \mathbb{E}[V_t|X_t]$ for almost every $t$. Let $B \in \mathcal{B}([0,1])$, $  A \in \mathcal{B}(\mathcal{H})$ and $  h \in \mathcal{H} .$ Then we obtain by \eqref{eq:v_t_unif} and Fubini's theorem
\begin{equation*}
    \int_B \mathbb{E}[\mathbf{1}_A\left(X_t\right)\langle v(t, X_t)-V_t, h\rangle] \dd t=0.
\end{equation*}
Hence the integral with respect to $t$ over every set $B \in \mathcal{B}([0,1])$ of $\mathbb{E}[\mathbf{1}_A\left(X_t\right)\langle v(t, X_t)-V_t, h\rangle]$ is zero for all $t$ up to a Lebesgue null set $N_{A, h}$. Note that the set $N_{A, h}$ can depend on $A$ and $h$. Choosing a countable algebra $\mathcal{A}$ generating $\mathcal{B}(\mathcal{H})$ and a countable (or finite) orthonormal basis $(e_j)$ of $\mathcal{H}$ and collecting all negligible  sets $N_{A, e_j}$ in the set $N_{0}$, we conclude that for every $t \notin N_{0}$, every $A \in \mathcal{A}$ and $j$,
\begin{equation*}
    \mathbb{E}[\mathbf{1}_A(X_t)\langle v(t, X_t)-V_t, e_j\rangle]=0.
\end{equation*}
Since $\mathcal{A}$ is a generating algebra, we conclude using \citet[Lemma 1.18]{Kallenberg2021-jb}  that 
\begin{equation*}
    v_t(X_t)=\mathbb{E}[V_t \mid X_t] \quad \text { almost surely, for every } t \notin N_0 .
\end{equation*}
Using \eqref{eq:V_t_finite} and the conditional version of Jensens inequality results in
\begin{equation*}
    \int_{\mathcal{H}}\left\|v_t(x)\right\|_{\mathcal{H}}^2 \mu_t(\mathrm{~d} x)=\mathbb{E}\left\|\mathbb{E}\left[V_t \mid X_t\right]\right\|_{\mathcal{H}}^2 \leq \mathbb{E}\|Y\|_{\mathcal{H}}^2+\left(1-\sigma_{\min }\right)^2 \operatorname{Tr}(C)<\infty
\end{equation*}
Thus we obtain
\begin{equation*}
    \int_0^1 \int_{\mathcal{H}}\|v_t(x)\|_{\mathcal{H}}^2 \dd\mu_t( x) \dd \eta(t) = \int_0^1 w(t)\Big[\int_{\mathcal{H}}\|v_t(x)\|_{\mathcal{H}}^2 \dd \mu_t( x)\Big] \dd t < \infty
\end{equation*}
since $w$ is the Radon-Nikodym derivative of the probability measure $\eta$ with respect to the Lebesgue measure. This yields $v \in L^2(\dd \eta(t)\mu_t;\mathcal{H})$.

Since 
\begin{equation*}
    \mathbb{E}[V_t-v_t(X_t) \mid X_t ] =  \mathbb{E}[V_t \mid X_t ] - v_t(X_t) = 0,
\end{equation*}
we have 
\begin{align*}
    \E_{T\sim\eta,Y,Z}&\norm{u_T(X_T)- v_T(X_T) + v_T(X_T)- V_T}^2 \\&=\int\norm{u_t(x)-v_t(x)}^2\mu_t(dx)\dd\eta(t) + \E_{T\sim\eta,Y,Z}\norm{V_T-v_T(X_T)}^2,
\end{align*}
where integrability is assured by assumption for the first integral and follows from $\mathbb{E}[\|V_t\|_{\mathcal{H}}]< \infty$. $\mu_1=\Law(Y+\sigma_{\min}Z)$ follows by construction. Since $\mathbb{E}[\|Y\|_{\mathcal{H}}]< \infty$ and $\mathbb{E}[\|Z\|_{\mathcal{H}}]< \infty$, we derive
\begin{equation*}
    W_2^2(\mu_1, \nu)\leq \mathbb{E}[\|Y + \sigma_{\min}Z - Y\|^2_{\mathcal{H}}] = \sigma_{\min}^2 \Tr(C).\qedhere
\end{equation*}
\end{proof}

\subsection{Proofs of \Cref{sec:geometry}}

\begin{proof}[Proof of \Cref{cor:flat}]
  If $\mathcal M$ is a closed affine subspace, then $\pi$
is the orthogonal projection onto $\mathcal M$.
In particular, for every $x\in\mathcal H$ and
$y\in\mathcal M$, the Pythagorean identity yields
\[
\|x-y\|_{\mathcal H}^2
=
\|x-\pi(x)\|_{\mathcal H}^2
+
\|\pi(x)-y\|_{\mathcal H}^2.
\]
Consequently,  the proof of Theorem~\ref{thm:geometric} below
applies {\it mutatis mutandis}, with $\chi=0$
and without any restriction on the support of $P$.

\end{proof}

\begin{proof}[Proof of \Cref{thm:geometric}] In this proof, we omit the index $\mathcal{H}$ in norms and inner products.
First, note that
\begin{equation*}
    W_2^2(P, \pi_{\#}P) = \inf_{\gamma \in \Gamma(P, \pi_{\#}P )}\int \|x-y\|^2 \dd \gamma (x, y) \geq \int \inf_{z \in \mathcal{M}} \|x-z\|^2 \dd P(x) = \mathcal{N}(P) 
\end{equation*}
and 
\begin{equation*}
    \inf_{\gamma \in \Gamma(P, \pi_{\#}P )}\int \|x-y\|^2 \dd \gamma (x, y) \leq \mathbb{E}[\|X- \pi(X)\|^2]= \mathcal{N}(P).
\end{equation*}
Thus $ W_2^2(P, \pi_{\#}P) = \mathcal{N}(P)$.
By triangle inequality
\begin{equation*}
    W_2(P, \nu) \leq  W_2(P, \pi_{\#}P) +  W_2(\pi_{\#}P, \nu) = \sqrt{\mathcal{N}(P)} + \sqrt{\mathcal{T}(P, \nu)}.
\end{equation*}
Reverse triangle inequality further yields
\begin{equation*}
     |W_2(P, \pi_{\#}P) -  W_2(\pi_{\#}P, \nu)| \leq W_2(P, \nu).
\end{equation*}
Squaring both sides yields the lower bound 
\begin{equation*}
    W^2_2(P, \pi_{\#}P) - 2 W_2(P, \pi_{\#}P)W_2(\pi_{\#}P, \nu) + W^2_2(\pi_{\#}P, \nu) \leq W^2_2(P, \nu)
\end{equation*}
and the upper bound
\begin{equation*}
    W^2_2(P, \nu) \leq W^2_2(P, \pi_{\#}P) + 2 W_2(P, \pi_{\#}P)W_2(\pi_{\#}P, \nu) + W^2_2(\pi_{\#}P, \nu).
\end{equation*}
Combining both bounds leads to
\begin{equation*}
  - 2 W_2(P, \pi_{\#}P) W_2(\pi_{\#}P, \nu) \leq \mathcal{C}(P, \nu )\leq  2 W_2(P, \pi_{\#}P) W_2(\pi_{\#}P, \nu),
\end{equation*}
which proves \eqref{eq:crude_geometry}.

For the second part of the claim, we start by calculating the Lipschitz constant of $\pi$ on $\mathcal{M}+r\mathbb{B}$. Let $x, x^{\prime} \in \mathcal{M}+r\mathbb{B}$. First, Cauchy-Schwarz inequality implies
\begin{equation*}
    \langle x-x^{\prime}, \pi(x) - \pi(x^{\prime}) \rangle \leq \|x-x^{\prime}\|\| \pi(x) - \pi(x^{\prime})\|.
  \end{equation*} Then, we add and subtract $\pi(x) - \pi(x^{\prime})$ and use \eqref{eq:reach_assumption} to obtain
\begin{align*}
     \langle x-x^{\prime}, \pi(x) - \pi(x^{\prime}) \rangle &= \| \pi(x) - \pi(x^{\prime}) \|^2 -    \langle x-\pi(x),   \pi(x^{\prime} - \pi(x)) \rangle -    \langle x^{\prime} - \pi(x^{\prime}), \pi(x) - \pi(x^{\prime}) \rangle\\
     & \geq \Big(1 - \frac{\chi}{2}(d_{\mathcal{M}}(x) + d_{\mathcal{M}}(x^{\prime})) \Big) \| \pi(x) - \pi(x^{\prime}) \|^2\\
     & \geq \Big(1 -\chi r  \Big)\| \pi(x) - \pi(x^{\prime}) \|^2,
\end{align*}
where the last inequality follows as $P(\{x\in\mathcal H:d_{\mathcal M}(x)\leq r\})=1$ by assumption.
Combining the upper and the lower bound, we obtain
\begin{equation*}
       (1 -\chi r ) \| \pi(x) - \pi(x^{\prime}) \|^2\leq     \|x-x^{\prime}\|\| \pi(x) - \pi(x^{\prime})\|.
\end{equation*}
Assume $\pi(x) \neq \pi(x^{\prime})$, then
\begin{equation*}
    \| \pi(x) - \pi(x^{\prime}) \|\leq \frac{1}{1-\chi r} \|x-x^{\prime}\|.
\end{equation*}
If $\pi(x) = \pi(x^{\prime})$ this holds trivially. Thus, $\pi \in \operatorname{Lip}\big(\frac{1}{1-\chi r} \big)$ on $U_r(\mathcal{M})$. This implies that $\pi$ is measurable.

For the lower bound, let $\gamma $ be the coupling that attains the lower bound in $W_2^2(P, \nu)$ and define $\bar{\gamma} = ( \pi, \operatorname{Id})_{\#}\gamma$. Note that existence follows from Lemma~8.1(a) in \cite{Bickel.Freedman.1981.AoS}. 
Using \eqref{eq:reach_assumption} leads to
\begin{align*}
    W_2^2(P, \nu) &= \int \|x- \pi(x) + \pi(x)-y\|^2\dd \gamma(x,y) \\&= \int \|x- \pi(x) \|^2\dd P(x ) + 2 \int \langle x- \pi(x) , \pi(x)-y \rangle + \|\pi(x)-y\|^2\dd \gamma(x,y)\\
    & \geq \mathcal{N}(P) + (1 - \chi r) \int\|z-y\|^2\dd \bar{\gamma}(z,y)\\
    & \geq \mathcal{N}(P) + (1 - \chi r) \mathcal{T}(P, \nu).
\end{align*}
For the upper bound, let $\gamma\in\Gamma(\pi_\#P,\nu)$
be an optimal coupling. By disintegration, there exists
$\gamma^*\in\Gamma(P,\nu)$ such that
$(\pi,\operatorname{Id})_\#\gamma^*=\gamma$. Then
\begin{align*}
    W_2^2(P, \nu) &\leq  \int \|x- \pi(x) + \pi(x)-y\|^2\dd \gamma^*(x,y) \\&= \int \|x- \pi(x) \|^2\dd P(x ) + 2 \int \langle x- \pi(x) , \pi(x)-y \rangle + \|\pi(x)-y\|^2\dd \gamma^*(x,y)\\
    & \leq  \mathcal{N}(P) + (1 + \chi r) \int\|z-y\|^2\dd \gamma(z,y)\\
    & \leq \mathcal{N}(P) + (1 + \chi r) \mathcal{T}(P, \nu).
\end{align*}
This leads to the desired bound in \eqref{eq:sharp_geometry}. 
\end{proof}

\begin{proof}[Proof of \Cref{cor:geocomparison}]
The first claim follows from the definition of the remainder \eqref{eq:Cdef}:
\begin{equation*}
    \mathbb{E}_{\mathcal{D}_n}[W_2^2(P^{\widehat{u}_{\lambda, n}}, \nu)] - \mathbb{E}_{\mathcal{D}_n}[W_2^2(P^{\widehat{u}_{0, n}}, \nu)] = (\mathsf{N}_\lambda-\mathsf{N}_0)+(\mathsf{T}_\lambda-\mathsf{T}_0)+(\mathsf{C}_\lambda-\mathsf{C}_0)
\end{equation*}
Hence, $\mathbb{E}_{\mathcal{D}_n}[W_2^2(P^{\widehat{u}_{\lambda, n}}, \nu)] - \mathbb{E}_{\mathcal{D}_n}[W_2^2(P^{\widehat{u}_{0, n}}, \nu)]<0$, if and only if
    \begin{equation*}
         \mathsf T_\lambda-\mathsf T_0+\mathsf C_\lambda-\mathsf C_0 < \mathsf N_0-\mathsf N_\lambda.
    \end{equation*}
For the second bound, we use that \eqref{eq:sharp_geometry} yields $|\mathsf C_\lambda | \leq \chi r \mathsf{T}_{\lambda}$,  for every $\lambda$, from where we deduce the second claim. 
\end{proof}

\subsection{Proofs of \Cref{sec:estimator_based_analysis}}
\begin{proof}[Proof of \Cref{thm:linear_linear_exact}]

First, we derive an explicit representation of the residual $R$ evaluated at $\Phi_{0 \rightarrow 1}^{\theta}(z)$. To this end, we start by showing that the solution of the ODE with $v^{\theta}$ starting from $s \in [0,1]$ and running to $r \in [0,1]$ with $s < r$,  
\begin{equation*}
    \frac{\partial}{\partial q}\gamma_q^{\theta,s}(x)=u_q(\gamma_q^{\theta,s}(x)),
 \qquad \gamma_s^{\theta,s}(x)=x
\end{equation*}
is given by
\begin{equation}\label{eq:solution_affine_field}
    \gamma_r^{\theta,s}(x)=r W \theta+\frac{\sigma_r}{\sigma_s}(x-s W \theta).
\end{equation}
The initial condition is fulfilled. Furthermore, taking the derivative yields
\begin{equation*}
     \frac{\partial}{\partial r} \gamma_r^{\theta,s}(x) = W\theta + \frac{\sigma_r^{\prime}}{\sigma_s} (x- s W \theta).
\end{equation*}
Inserting the proposed solution \eqref{eq:solution_affine_field} into the vector field structure leads to
\begin{align*}
    v_s^{\theta}(\gamma_r^{\theta,s}(x)) &= \frac{\sigma_r^{\prime}}{\sigma_r}\Bigl(r W\theta + \frac{\sigma_r}{\sigma_s}(x-sW\theta) \Bigr) + \Bigl( 1- r \frac{\sigma_r^{\prime}}{\sigma_r} \Bigr)W \theta\\
    & = \Big(1-   \frac{\sigma_r^{\prime}}{\sigma_s} s\Big)W\theta + \frac{\sigma_r^{\prime}}{\sigma_s}x,
 \end{align*}
 which verifies \eqref{eq:solution_affine_field}. Setting the starting time $t=0$ leads to
\begin{equation}\label{eq:solution_affine_field_0_1}
     \Phi_{0 \rightarrow 1}^{\theta}(z) = W\theta + \sigma_{\min}z.
 \end{equation}
    Under \Cref{ass:residual}, we can rewrite the residual using that $AY_i = f$ almost surely as
    \begin{align*}
        J_{n}(\theta) &= \frac{1}{n} \sum_{i = 1}^n \int_0^1 \mathbb{E}_Z \|\frac{1-t}{\sigma_t}(AW\theta -f) - \sigma_{\min}AZ\|_{\mathcal{Y}}^2\dd \eta(t)\\
        & = \beta_{\sigma} \|AW\theta -f\|_{\mathcal{Y}}^2 -    2    
        \frac{1}{n} \sum_{i = 1}^n \int_0^1 \mathbb{E}_Z \Big\langle \frac{1-t}{\sigma_t}(AW\theta -f),  \sigma_{\min}AZ\Big\rangle_{\mathcal{Y}}\dd \eta(t)\\
        & \quad + \sigma_{\min}^2 \mathbb{E}_{Z}[\|AZ\|^2].
    \end{align*}
    where the cross-term vanishes as $\mathbb{E}[Z] = 0$ . The last term does not depend on $\theta$ and is thus not relevant for the minimisation. 
    Denoting the sample mean by $\bar{Y}_n = \frac{1}{n} \sum_{i =1}^n Y_i$ and inserting the structure of the class $\mathcal{V}$ into the flow matching objective \eqref{eq:emp-reg} yields
\begin{align*}
   F_n(u^\theta)&=\frac{1}{n} \sum_{i = 1}^n \int_0^1 \mathbb{E}_Z[\|u_{t}^\theta(tY_i+\sigma_{t} Z)- (Y_i+\sigma_{t}^{\prime} Z)\|_{\mathcal{H}}^2] \dd \eta(t)\\
   & = \frac{1}{n} \sum_{i = 1}^n \int_0^1 \mathbb{E}_Z[\|(t \frac{\sigma_t^{\prime}}{\sigma_t} -1) (Y_i - W \theta)\|_{\mathcal{H}}^2] \dd \eta(t)\\
   & = \frac{\kappa_{\sigma}}{n} \sum_{i = 1}^n \|W \theta - Y_i\|_{\mathcal{H}}^2.
\end{align*}
Thus, minimising \eqref{eq:estimator} in the setting of this theorem is equivalent to minimising 
    \begin{equation}\label{eq:objective_linear_linear}
        Q_{n, \lambda}(\theta) \coloneqq  \frac{\kappa_{\sigma}}{n} \sum_{i = 1}^n \| W\theta - Y_i \|_{\mathcal{H}}^2 + \lambda \beta_{\sigma} \|AW\theta - f\|_{\mathcal{Y}}^2.
    \end{equation}
    Taking the directional derivative for $h \in \mathcal{E}$ leads to to
    \begin{align*}
        D_{\theta}Q_{n, \lambda}(\theta)[h] & = \lim_{\varepsilon \rightarrow 0} \frac{Q_{n, \lambda}(\theta + \varepsilon h) - Q_{n, \lambda}(\theta)}{\varepsilon}\\
        & = 2 \frac{\kappa_{\sigma}}{n} \sum_{i = 1}^n \langle W\theta - Y_i, Wh \rangle_{\mathcal{H}} + 2 \lambda \beta_{\sigma} \langle AW\theta - f, AW h\rangle_{\mathcal{Y}}\\
        & = 2 \kappa_{\sigma} \langle W\theta - \bar{Y}_n, Wh \rangle_{\mathcal{H}} + 2 \lambda \beta_{\sigma} \langle AW\theta - f, AW h\rangle_{\mathcal{Y}}.
    \end{align*} This expression is equal to $0$ for all $h \in \mathcal{E}$ precisely when
    \begin{equation}\label{eq:normal_equation_linear_linear}
        (\kappa_{\sigma} W^*W+ \lambda \beta_{\sigma} W^*A^*AW)\theta = \kappa_{\sigma}W^* \bar{Y}_n +\lambda \beta_{\sigma} W^*A^* f.
    \end{equation}
    We recall and define 
    \begin{equation*}
        G_{\lambda} \coloneqq  \kappa_{\sigma} W^*W+ \lambda \beta_{\sigma} W^*A^*AW, \quad H \coloneqq \kappa_{\sigma }
W^* \bar{Y}_n + \lambda \beta_{\sigma}W^*A^* f.
    \end{equation*}
    We note that $G_{\lambda} $ is self-adjoint and that for every $h \in \mathcal{E}$
    \begin{equation*}
        \langle G_{\lambda}h, h \rangle_{\mathcal{E}} = \kappa_{\sigma} \|Wh\|^2_{\mathcal{H}} + \lambda \beta_{\sigma} \|AWh\|^2_{\mathcal{Y}}\geq \kappa_{\sigma}c_W^2 \|h\|^2_{\mathcal{E}},
    \end{equation*}
    with $c_W$ from \Cref{ass:vector_field_affine}.
    Thus the inverse of $G_{\lambda}$ exists and is bounded. Hence, the the normal equation \eqref{eq:normal_equation_linear_linear} is fulfilled if and only if
    \begin{equation*}
      \theta = G_{\lambda}^{-1}H    =:  \widehat{\theta}_{n, \lambda}.
    \end{equation*}
    This is indeed the unique global minimiser of \eqref{eq:objective_linear_linear} since for every other $\theta \in \mathcal{E}$
    \begin{align*}
        Q_{n, \lambda}(\theta)  - Q_{n, \lambda}(\widehat{\theta}_{n, \lambda}) & = \langle G_{\lambda} \theta,\theta \rangle_{\mathcal{E}} - 2 \langle H,\theta \rangle_{\mathcal{E}} - \langle G_{\lambda} \widehat{\theta}_{n, \lambda},\widehat{\theta}_{n, \lambda} \rangle_{\mathcal{E}} + 2 \langle H , \widehat{\theta}_{n, \lambda}\rangle_{\mathcal{E}}\\
        & = \langle G_{\lambda} (\theta - \widehat{\theta}_{n, \lambda}), \theta - \widehat{\theta}_{n, \lambda}\rangle_{\mathcal{E}} \\
        & \geq \kappa_{\sigma}c_W^2 \|\theta - \widehat{\theta}_{n, \lambda}\|^2_{\mathcal{E}}, 
    \end{align*}
   and the  first claim follows. For the equivalence statement \eqref{eq: equivalence}, we perform a bias-variance decomposition of  the risk, i,e., 
    \begin{equation*}
        \mathcal{R}(\widehat{\theta}_{n,\lambda}) = \mathbb{E}[\|W \widehat{\theta}_{n,\lambda} - m_{\nu}\|_{\mathcal{H}}^2] = \mathbb{E}[\|W (\widehat{\theta}_{n,\lambda} - \mathbb{E}[\widehat{\theta}_{n,\lambda}])\|_{\mathcal{H}}^2]+ B_{\lambda}^2,  
    \end{equation*}
    where $B_\lambda$ is defined in the statement of the theorem. For the variance term, we use that
    \begin{equation*}
        \mathbb{E}[\widehat{\theta}_{n,\lambda}] = G_{\lambda}^{-1}(\kappa_{\sigma }W^*m_{\nu} +\lambda \beta_{\sigma} W^*A^* f)
    \end{equation*} and thus
    \begin{equation*}
        \mathbb{E}[\|W (\widehat{\theta}_{n,\lambda} - \mathbb{E}[\widehat{\theta}_{n,\lambda}])\|_{\mathcal{H}}^2] =\kappa_{\sigma}^2 \mathbb{E}[\|W (G_{\lambda}^{-1}W^*(\bar{Y}_n - m_{\nu}))\|_{\mathcal{H}}^2].
    \end{equation*}
    We have $\bar{Y}_n - m_{\nu} \in L^2$ and the covariance operator of $\bar{Y}_n$ is given by $\frac{1}{n}\Sigma_{\nu}$. {\it A fortiori}, there exist an orthonormal basis $(e_j)$ of $\mathcal{H}$ such that 
    \begin{align*}
         \mathbb{E}[\|W (G_{\lambda}^{-1}W^*(\bar{Y}_n - m_{\nu}))\|_{\mathcal{H}}^2] 
         &= \mathbb{E}\Big[ \sum_{j} \langle W (G_{\lambda}^{-1}W^*(\bar{Y}_n - m_{\nu})),e_j \rangle^2\Big]\\
         & =  \sum_{j}\mathbb{E}[ \langle \bar{Y}_n - m_{\nu}, W G_{\lambda}^{-1}W^* e_j \rangle^2]\\
        & =  \sum_{j} \Big\langle \frac{1}{n}\Sigma_{\nu} W G_{\lambda}^{-1}W^* e_j, W G_{\lambda}^{-1}W^* e_j \Big\rangle\\
         & =\Tr(WG_{\lambda}^{-1}W^*\frac{1}{n}\Sigma_{\nu}W G_{\lambda}^{-1}W^* )\\
         & = \frac{1}{n} \|W G_{\lambda}^{-1} W^* \Sigma_{\nu}^{1/2}\|_{\operatorname{HS}}^2.
    \end{align*}
    Here, we used that by Theorem 7.2.5 in \cite{Eubank2015-lg}, for $\rho \in \mathcal{P}_2(\mathcal{H})$ and $h,h^{\prime} \in \mathcal{H}$, $\langle\Sigma_\rho h, h^{\prime}\rangle=\mathbb{E}[\langle X-m_\rho, h\rangle\langle X-m_\rho, h^{\prime}\rangle]$. The same result also grants finiteness of the trace.
    Thus
    \begin{equation*}
        \mathcal{R}(\widehat{\theta}_{n,\lambda})  = \frac{1}{n} V_{\lambda} + B^2_{\lambda}.
    \end{equation*}
    Rearranging
    \begin{equation*}
        \mathcal{R}(\lambda) < \mathcal{R}(0) \quad \Longleftrightarrow \quad B^2_\lambda + \frac{V_{\lambda}}{n}<B^2_0 + \frac{V_{0}}{n}
    \end{equation*}
    yields the second claim. 
\end{proof}
\begin{proof}[Proof of \Cref{prop:connection}]
 For the first statement, we first note that
    \begin{equation*}
        Am_{\nu} = A \mathbb{E}[Y] = \mathbb{E}[AY] = f 
    \end{equation*}
    and thus
    \begin{equation*}
      0 =  Ax-f = A(x-m_{\nu}),
    \end{equation*}
    which implies that $\mathcal{M} = m_{\nu} + \ker(A)$. 
Since $\ker(A)$ is closed, we can decompose
    \begin{equation*}
        x- m_{\nu} = \Pi(x-m_{\nu}) + N(   x- m_{\nu}),
    \end{equation*}
    which implies that $\pi(x) = \Pi(x-m_{\nu}) +  m_{\nu}$ and
    $d_{\mathcal{M}}^2(x) = \|N(   x- m_{\nu})\|_{\mathcal{H}}^2$.  Using that the generated distribution is given by $P^{\theta} = \operatorname{Law}(W \theta + \sigma_{\min}Z)$, we get 
    \begin{equation*}
        \mathcal{N}(P^{\theta}) = \mathbb{E}_Z[\|N(W \theta + \sigma_{\min}Z) - Nm_{\nu}\|_{\mathcal{H}}^2] = \|N(W\theta - m_{\nu})\|_{\mathcal{H}}^2 + \sigma_{\min}^2\mathbb{E}_Z [\|NZ\|_{\mathcal{H}}^2],
    \end{equation*}
    where we use the notations introduced in \Cref{sec:geometry}.
    Furthermore, we recall from the proof of \Cref{thm:linear_linear_exact} that $P^{\theta} = \operatorname{Law}(W\theta + \sigma_{\min}Z)$. Combined with a shift by $m_{\nu}$, we obtain
    \begin{align*}
        \mathcal{T}(P^{\theta}, \nu) &= W_2^2(\pi_{\#}P^{\theta}, \nu) \\& =  W_2^2((\tau_{-m_{\nu}} \circ\pi)_{\#}P^{\theta}, (\tau_{-m_{\nu}})_{\#}\nu)\\&
        = \inf _{\gamma \in \Gamma\left(\operatorname{Law}\left(\sigma_{\min } \Pi Z\right),\left(\tau_{-m_\nu}\right)_{\#} \nu\right)} \int\left\|\Pi\left(W \theta-m_\nu\right)+z-u\right\|_{\mathcal{H}}^2 \dd \gamma(z, u)\\
        & = \|\Pi(W\theta - m_{\nu}) \|_{\mathcal{H}}^2 + \inf _{\gamma \in \Gamma\left(\operatorname{Law}\left(\sigma_{\min } \Pi Z\right),\left(\tau_{-m_\nu}\right)_{\#} \nu\right)} \int\|z-u\|^2 d \gamma(z, u),
    \end{align*}
    where $\tau_{-m_{\nu}}$ is the translation map defined via $\tau_{-m_{\nu}}(x)\coloneqq x-m_{\nu} $. In the last equality, we have used that both distributions are centred at $0$. Thus
    \begin{equation*}
        \mathsf{N}_0 - \mathsf{N}_{\lambda} = \mathbb{E} \|N(W\widehat{\theta}_{n, 0} - m_{\nu})\|_{\mathcal{H}}^2  - \mathbb{E} \|N(W\widehat{\theta}_{n, \lambda} - m_{\nu})\|_{\mathcal{H}}^2 
    \end{equation*}
    and \begin{equation*}
        \mathsf{T}_{\lambda} - \mathsf{T}_0 = \mathbb{E}\|\Pi(W\widehat{\theta}_{n, \lambda} - m_{\nu}) \|_{\mathcal{H}}^2 -  \mathbb{E}\|\Pi(W\widehat{\theta}_{n, 0} - m_{\nu}) \|_{\mathcal{H}}^2.
    \end{equation*}
The bias--variance decomposition used in the proof of
\Cref{thm:linear_linear_exact} applies, {\it mutatis mutandis},
to each $Q\in\{\Pi,N\}$. Retaining, {\it in toto}, the
notation introduced therein, we obtain
\begin{equation*}
\begin{aligned}
\mathbb{E}\|Q(W\widehat{\theta}_{n,\lambda}-m_\nu)\|_{\mathcal H}^2
&=
\|Q(W\mathbb{E}[\widehat{\theta}_{n,\lambda}]-m_\nu)\|_{\mathcal H}^2+
\kappa_\sigma^2
\mathbb{E}\|QWG_\lambda^{-1}W^*(\bar Y_n-m_\nu)\|_{\mathcal H}^2.
\end{aligned}
\end{equation*}
The variance term is evaluated by the same covariance identity
as in the proof of \Cref{thm:linear_linear_exact}, yielding
\begin{equation*}
\mathbb{E}\|QWG_\lambda^{-1}W^*(\bar Y_n-m_\nu)\|_{\mathcal H}^2
=
\frac{1}{n}
\|QWG_\lambda^{-1}W^*\Sigma_\nu^{1/2}\|_{\mathrm{HS}}^2.
\end{equation*}
We recall from the proof of \Cref{thm:linear_linear_exact} that
\begin{equation*}
     \mathcal{R}(\widehat{\theta}_{n,\lambda})  = \frac{1}{n} V_{\lambda} + B^2_{\lambda}
\end{equation*}
Combined with \eqref{eq:decomp_geometric} we obtain the second statement.
\end{proof}
\begin{proof}[Proof of \Cref{thm:vt_affine}]
We denote the sample mean by $\bar{Y}_n = \frac{1}{n} \sum_{i =1}^n Y_i$. In the proof of \Cref{thm:linear_linear_exact}, we already showed that
\begin{equation*}
   F_n(u^\theta)= \frac{\kappa_{\sigma}}{n} \sum_{i = 1}^n \|W \theta - Y_i\|_{\mathcal{H}}^2 = \kappa_{\sigma} \|W \theta -  \bar{Y}_n \|_{\mathcal{H}}^2 + \frac{\kappa_{\sigma}}{n} \sum_{i = 1}^n \|\bar{Y}_n  - Y_i\|_{\mathcal{H}}^2.
\end{equation*}
The second equality prepares a later step of the proof. Remark~2.19 in \cite{gabriel2019computational},
although stated for Euclidean spaces, extends, {\it mutatis mutandis,}
to our Hilbert-space setting. {\it Ipso facto,} there exists a constant
$K_{\rm shape}$, independent of $\theta$, such that the
$2$-Wasserstein error of the corresponding distribution satisfies
\begin{equation*}
 W_2^2(P^\theta,\nu)
 =
 K_{\rm shape}
 +
 \norm{W\theta-m_\nu}_{\mathcal{H}}^2.
\end{equation*}

\end{proof}
\begin{proof}[Proof of \Cref{prop:eigenbasis_euclidean_space}]
    We start by introducing $\tilde{W} = WU^{-1/2}$, which is an isometry, and $S_\lambda=(I_{\mathcal{E}}+\rho_\lambda V)^{-1}$. We are going to use that $S_\lambda=I_{\mathcal{E}}-\rho_\lambda S_\lambda V$ and that $S_{\lambda}$ is self-adjoint since $V$ is self adjoint. Inserting all quantities in \eqref{eq:def_G_lambda} leads to 
    \begin{equation}\label{eq:G_inverse}
        G_{\lambda} = \kappa_\sigma U^{1 / 2}\left(I_{\mathcal{E}}+\rho_\lambda V\right) U^{1 / 2}, \quad G_\lambda^{-1}=\frac{1}{\kappa_\sigma} U^{-1 / 2} S_\lambda U^{-1 / 2}.
    \end{equation}
    We recall from \Cref{thm:linear_linear_exact}
    \begin{equation*}
        \widehat\theta_{n,\lambda}
=
G_\lambda^{-1}
\left(
\kappa_\sigma W^*\bar Y_n
+
\lambda\beta_\sigma W^*A^*f
\right).
    \end{equation*}
    Then
    \begin{align*}
        W \mathbb{E}[\widehat{\theta}_{n, \lambda}]&=\tilde{W} S_\lambda(\tilde{W}^* m_\nu+\rho_\lambda \tilde{W}^* A^* f)\\
        & = \tilde{W}(I_{\mathcal{E}}-\rho_\lambda S_\lambda V) \tilde{W}^* m_\nu+\rho_\lambda \tilde{W} S_\lambda \tilde{W}^* A^* f\\
        & = \tilde{W} \tilde{W}^* m_\nu-\rho_\lambda \tilde{W} S_\lambda V \tilde{W}^* m_\nu+\rho_\lambda \tilde{W} S_\lambda \tilde{W}^* A^* f\\
        & = \tilde{W} \tilde{W}^* m_\nu + \rho_\lambda \tilde{W} S_\lambda (\tilde{W}^* A^* f - V \tilde{W}^* m_\nu).
    \end{align*}
    Inserting the definitions of $V$ and $\tilde{W}^*$ leads to
    \begin{align*}
        \tilde{W}^* A^* f-V \tilde{W}^* m_\nu & =   \tilde{W}^* A^* f - \tilde{W}^* A^* A \tilde{W} \tilde{W}^* m_\nu\\
        & = \tilde{W}^* A^*  ( f- A \tilde{W} \tilde{W}^* m_\nu)\\
        & = U^{-1 / 2} W^* A^*(f-A W U^{-1} W^* m_\nu).
    \end{align*}
    Thus
    \begin{equation*}
          W \mathbb{E}[\widehat{\theta}_{n, \lambda}] = W(W^* W)^{-1} W^* m_{\nu} + \rho_\lambda \tilde{W} S_\lambda  U^{-1 / 2} W^* A^*(f-A W U^{-1} W^* m_\nu).
    \end{equation*}
    Using the definition of $\tilde{W}$, we see that both components belong to $\operatorname{ran}(W)$ and
  $
         W(W^* W)^{-1} W^* m_\nu-m_\nu \in \operatorname{ran}(W)^{\perp}.
 $
    This leads to
    \begin{align*}
        B_\lambda^2&=\| W(W^* W)^{-1} W^* m_\nu-m_\nu+\rho_\lambda \tilde{W} S_\lambda g\|^2 \\&= \| W(W^* W)^{-1} W^*  m_\nu-m_\nu\|^2+\rho_\lambda^2\|\tilde{W} S_\lambda U^{-1 / 2} W^* A^*(f-A W U^{-1} W^* m_\nu)\|^2.
    \end{align*}
    Hence, since $\tilde{W}$ is an isometry
    \begin{align*}
        B_\lambda^2-B_0^2& =\rho_\lambda^2\|\tilde{W} S_\lambda U^{-1 / 2} W^* A^*(f-A W U^{-1} W^* m_\nu)\|_{\mathcal{H}}^2 \\ &= \rho_\lambda^2\| S_\lambda U^{-1 / 2} W^* A^*(f-A W U^{-1} W^* m_\nu)\|_{\mathcal{H}}^2. 
    \end{align*}
    Since $(e_j)_{j=1}^p$ is an orthonormal eigenbasis of $V$, we know that $(I_{\mathcal{E}}+\rho_\lambda V) e_j=(1+\rho_\lambda d_j) e_j$. Thus
    \begin{equation*}
    e_j =     S_\lambda(I_{\mathcal{E}}+\rho_\lambda V) e_j=(1+\rho_\lambda d_j) S_\lambda e,
    \end{equation*}
    which implies $S_\lambda e_j=\frac{1}{1+\rho_\lambda d_j} e_j$. This finally yields
    \begin{equation*}
        B_\lambda^2-B_0^2 = \rho_{\lambda}^2 = \Big\|\sum_{j=1}^p \frac{g_j}{1+\rho_\lambda d_j} e_j\Big\|_{\mathcal{E}}^2 = \sum_{j=1}^p \frac{g_j^2}{\left(1+\rho_\lambda d_j\right)^2}.
    \end{equation*}
    For the variance, substitute \eqref{eq:G_inverse} into $V_{\lambda}$ in \Cref{thm:linear_linear_exact}, which leads to
    \begin{align*}
        V_\lambda&=\kappa_\sigma^2\|W G_\lambda^{-1} W^* \Sigma_\nu^{1 / 2}\|_{\mathrm{HS}}^2 = \big\|W U^{-1 / 2} S_\lambda U^{-1 / 2} W^* \Sigma_\nu^{1 / 2}\big\|_{\mathrm{HS}}^2\\& = \| \tilde{W} S_\lambda \tilde{W}^* \Sigma_\nu^{1 / 2} \|_{\mathrm{HS}}^2 = \|  S_\lambda \tilde{W}^* \Sigma_\nu^{1 / 2} \|_{\mathrm{HS}}^2= \Tr(S_\lambda \tilde{W}{ }^* \Sigma_\nu \tilde{W} S_\lambda),
    \end{align*}
    where the penultimate equality follows from the fact that $\tilde{W}$ is an isometry. 
    Using the orthonormal eigenbasis $(e_j)_{j=1}^p$ and the fact that $S_{\lambda}$ is self adjoint, we obtain
    \begin{equation*}
        V_\lambda = \sum_{j=1}^p\big\langle S_\lambda \tilde{W}^* \Sigma_\nu \tilde{W} S_\lambda e_j, e_j\big\rangle =  \sum_{j=1}^p\big\langle \tilde{W}^* \Sigma_\nu \tilde{W} S_\lambda e_j, S_\lambda e_j\big\rangle. 
    \end{equation*}
    Using that $S_\lambda e_j=\frac{1}{1+\rho_\lambda d_j} e_j$ and inserting the definition of $\gamma_j$, we obtain
    \begin{equation*}
        V_\lambda = \sum_{j=1}^p \frac{1}{(1+\rho_\lambda d_j)^2}\big\langle\tilde{W}^* \Sigma_\nu \tilde{W} e_j, e_j\big\rangle_{\mathcal{E}} = \sum_{j=1}^p \frac{\gamma_j}{\left(1+\rho_\lambda d_j\right)^2}.
    \end{equation*}
    Thus
    \begin{equation*}
        V_0-V_\lambda = \rho_\lambda \sum_{j=1}^p \frac{(2 d_j+\rho_\lambda d_j^2) \gamma_j}{(1+\rho_\lambda d_j)^2}.
    \end{equation*}
    Inserting the expressions for $B_\lambda^2-B_0^2$ and $V_0-V_\lambda$ into the condition in \Cref{thm:linear_linear_exact} yields the claim. 
\end{proof}
\begin{proof}[Proof of \Cref{cor:linear_comparison}]
As shown in \Cref{thm:vt_affine}, 
\begin{equation*}
 W_2^2(P^\theta,\nu)
 =
 K_\sigma
 +
 \norm{W\theta-m_\nu}_{\mathcal{H}}^2.
\end{equation*}
Thus in our setting 
\begin{equation*}
    \mathcal{R}_n(\lambda) - \mathcal{R}_n(0) = \mathbb{E}[\|W\widehat{\theta}_{n, \lambda} - m_{\nu}\|_{\mathcal{H}}^2 - \|W\widehat{\theta}_{n,0} - m_{\nu}\|_{\mathcal{H}}^2].
\end{equation*}
Since
\begin{equation*}
    \|W(\widehat{\theta}_{n, \lambda}-\widehat{\theta}_{0, \lambda})  + W\widehat{\theta}_{0, \lambda} -  m_{\nu}\|_{\mathcal{H}}^2 = \| W \delta_{n, \lambda}\|_{\mathcal{H}}^2 + 2 \langle W \delta_{n, \lambda} ,  W\widehat{\theta}_{0, \lambda} -  m_{\nu}\rangle_{\mathcal{H}}  + \|W\widehat{\theta}_{0, \lambda} -  m_{\nu} \|_{\mathcal{H}}^2,
\end{equation*}
we derive
\begin{equation*}
 \Risk_n(\lambda)-\Risk_n(0)
 =\E\norm{W\delta_{n,\lambda}}^2
 +2\E\ip{W\widehat{\theta}_{0, \lambda} -  m_{\nu},W\delta_{n,\lambda}}.\qedhere
\end{equation*}
\end{proof}
\subsection{Proofs of \Cref{sec:last_layer}}
\begin{proof}[Proof of \Cref{thm:last_layer_improvement}]
First, we note that since $\sigma_t^{\prime} t - \sigma_t = -1$,
\begin{align*}
    u^{B, D}_t(X_t) - (Y_i + \sigma_t^{\prime}Z)  &= \frac{1}{\sigma_t}(B \phi(\Pi(tY_i + \sigma_tZ)) - Y_i + D\psi(N(tY_i + \sigma_tZ)) \\
    & = \frac{1}{\sigma_t} (\underbrace{B \phi(t Y_i + \sigma_t \Pi Z) - Y_i }_{\in \ker(A)}+ \underbrace{\sigma_t D\psi(NZ)}_{\in \ker(A)^{\perp}} ).
\end{align*}
Thus, we can decompose 
\begin{equation*}
    \|    u^{B, D}_t(X_t) - (Y_i + \sigma_t^{\prime}Z) \|^2 = \frac{\|B \phi(t Y_i + \sigma_t \Pi Z) -Y_i\|^2}{\sigma_t^2} + \|D\psi(NZ)\|^2.
\end{equation*}
Using the one-step Euler solution \eqref{eq:euler_step}, the residual in \eqref{eq:Jn_euler} is evaluated at
\begin{equation*}
    \tilde{\Phi}_{t \rightarrow 1}^{B, D}(x) = \frac{\sigma_{\min}}{\sigma_t}x + \frac{1-t}{\sigma_t}(B\phi(\Pi x) + D \psi(Nx)).
\end{equation*}
Since $B(\phi(\Pi x)  \in \ker(A)$, the residual function evaluated at $tY_i + \sigma_t Z$ is
\begin{equation*}
    A \tilde{\Phi}_{t \rightarrow 1}^{B, D}(t Y_i + \sigma_t Z) = A(\sigma_{\min}Z + (1-t) D \psi(NZ)).  
\end{equation*}
Thus, we can decompose the penalised flow matching objective into 
\begin{align*}
    F_n(u^{B, D}) + \lambda \tilde{J}_n(u^{B, D}) &= \frac{1}{n}\sum_{i = 1}^n \int_0^1 \frac{w(t)}{\sigma_t^2}\mathbb{E}_Z[\|B \phi_{t,i} - Y_i \|^2] \dd t+ \mathbb{E}_Z[\|D\psi(NZ)\|^2] \\ & \quad + \lambda \int_0^1 w(t)\mathbb{E}_Z[\|A(\sigma_{\min}Z + (1-t) D \psi(NZ))\|^2] \dd t,
\end{align*}
where $\phi_{t,i} \coloneqq \phi(t Y_i + \sigma_t \Pi Z) $. In this setting, $B$ and $D$ can be optimised separately and the optimisation of $D$ is not influenced by the set of observations $\mathcal{D}_n$. 
For the optimal choice of $B$, we define
\begin{equation*}
    Q_n(B)\coloneqq\frac{1}{n} \sum_{i=1}^n \int_0^1 \frac{w(t)}{\sigma_t^2} \mathbb{E}_Z\big[\|B \phi_{t, i}-Y_i\|_{\mathbb{R}^d}^2] \dd t.
\end{equation*}
For a linear map $\Delta \colon \mathbb{R}^p \rightarrow \ker(A)$, the directional derivative is given by
\begin{equation*}
    \begin{aligned}
D_B Q_n(B)[\Delta] & =\lim _{\varepsilon \rightarrow 0} \frac{Q_n(B+\varepsilon \Delta)-Q_n(B)}{\varepsilon} \\
& =\frac{2}{n} \sum_{i=1}^n \int_0^1 \frac{w(t)}{\sigma_t^2} \mathbb{E}_Z[\langle B \phi_{t, i}-Y_i, \Delta \phi_{t, i}\rangle_{\mathbb{R}^d}] \dd t\\
&=\frac{2}{n} \sum_{i=1}^n \int_0^1 \frac{w(t)}{\sigma_t^2} \mathbb{E}_Z[\langle(B \phi_{t, i}-Y_i) \phi_{t, i}^*, \Delta\rangle_{\mathrm{HS}}] \dd t\\
& = \Big\langle B\Big(\frac{1}{n} \sum_{i=1}^n \int_0^1 \frac{w(t)}{\sigma_t^2} \mathbb{E}_Z[\phi_{t, i} \phi_{t, i}^*] \dd t \Big)-\frac{1}{n} \sum_{i=1}^n \int_0^1 \frac{w(t)}{\sigma_t^2} \mathbb{E}_Z[Y_i \phi_{t, i}^*] \dd t, \Delta\Big\rangle_{\mathrm{HS}}
\end{aligned}
\end{equation*}
where for two linear operators $B_1, B_2$ we define $\langle B_1, B_2\rangle_{\mathrm{HS}}=\operatorname{Tr}(B_1^* B_2)$. Ensuring that $D_B Q_n(B)$ vanishes for every $\Delta$ is equivalent to 
\begin{equation*}
    B\frac{1}{n} \sum_{i=1}^n \int_0^1 \frac{w(t)}{\sigma_t^2} \mathbb{E}_Z\left[\phi_{t, i} \phi_{t, i}^*\right] \dd t = \frac{1}{n} \sum_{i=1}^n \int_0^1 \frac{w(t)}{\sigma_t^2} \mathbb{E}_Z\left[Y_i \phi_{t, i}^*\right] \dd t.
\end{equation*}
By assumption, the first matrix is invertible, thus we set
\begin{equation*}
    \widehat{B}_n \coloneqq \Big( \frac{1}{n} \sum_{i=1}^n \int_0^1 \frac{w(t)}{\sigma_t^2} \mathbb{E}_Z\left[Y_i \phi_{t, i}^*\right] \dd t \Big) \Big(\frac{1}{n} \sum_{i=1}^n \int_0^1 \frac{w(t)}{\sigma_t^2} \mathbb{E}_Z\left[\phi_{t, i} \phi_{t, i}^*\right] \dd t \Big)^{-1} .
\end{equation*}
Since $Q_n$ is a convex function in $B$, $\widehat{B}_n $ is the unique global minimiser.

For the optimisation of $D$, we take the same approach. Setting $a_1 \coloneqq \int_0^1 w(t)(1-t)\dd t $ and $a_2\coloneqq \int_0^1 w(t)(1-t)^2\dd t$, we define
\begin{equation*}
    G_{\lambda}(D) \coloneqq \mathbb{E}_Z\left[\|D \psi(N Z)\|^2\right]+\lambda \int_0^1 w(t) \mathbb{E}_Z\left[\left\|A\left(\sigma_{\text {min }} Z+(1-t) D \psi(N Z)\right)\right\|^2\right] \dd t.
\end{equation*}
With the same definition of $\langle \cdot, \cdot \rangle_{\mathrm{HS}},$ we obtain for every linear $\Delta \colon \mathbb{R}^q \rightarrow \ker(A)^{\perp} $
\begin{align*}
D_D G_\lambda(D)[\Delta]= & \lim _{\varepsilon \rightarrow 0} \frac{Q_\lambda(D+\varepsilon \Delta)-Q_\lambda(D)}{\varepsilon} \\
= & 2 \mathbb{E}_Z\left[\langle D \psi(N Z), \Delta \psi(N Z)\rangle_{\mathbb{R}^d}\right] \\
& +2 \lambda \int_0^1 w(t)(1-t) \mathbb{E}_Z\left[\left\langle A\left(\sigma_{\text {min }} Z+(1-t) D \psi(N Z)\right), A \Delta \psi(N Z)\right\rangle\right] \dd t \\
= & 2 \mathbb{E}_Z\left[\left\langle D \psi(N Z) \psi(N Z)^*, \Delta\right\rangle_{\mathrm{HS}}\right] \\
& +2 \lambda \int_0^1 w(t)(1-t) \mathbb{E}_Z\left[\left\langle A^* A\left(\sigma_{\text {min }} N Z+(1-t) D \psi(N Z)\right) \psi(N Z)^*, \Delta\right\rangle_{\mathrm{HS}}\right] \dd t \\
= & 2\left\langle D H+\lambda a_1 \sigma_{\text {min }} A^* A \mathbb{E}_Z[NZ\psi(NZ)]+\lambda a_2 A^* A D H, \Delta\right\rangle_{\mathrm{HS}},
\end{align*}
where $H \coloneqq \mathbb{E}[\psi(NZ)\psi(NZ)^*]$ is positive definite by assumption. Then, we obtain the normal equation 
\begin{equation*}
    DH + \lambda a_1 \sigma_{\min} A^*A\mathbb{E}_Z[NZ\psi(NZ)] + a_2 \lambda A^* A DH = 0, 
\end{equation*}
which results in 
\begin{equation*}
    \widehat{D} = - M_{\lambda} \mathbb{E}_Z[NZ\psi(NZ)]H^{-1}, \quad \text{where} \quad M_{\lambda} \coloneqq \lambda a_1 \sigma_{\min} (I_d + a_2 \lambda A^*A)^{-1}A^*A.
\end{equation*}
This is indeed the global minimiser, since
\begin{align*}
G_\lambda(\widehat{D}+\Delta)-G_\lambda(\widehat{D})= & 2\left\langle\widehat{D} H+\lambda a_1 \sigma_{\min } A^* A \mathbb{E}_Z[NZ\psi(NZ)]+\lambda a_2 A^* A \widehat{D} H, \Delta\right\rangle_{\mathrm{HS}} \\
& +\mathbb{E}_Z\left[\|\Delta \psi(N Z)\|^2\right]+\lambda a_2 \mathbb{E}_Z\left[\|A \Delta \psi(N Z)\|^2\right] \\
= & \Big\|\Delta H^{1 / 2}\Big\|_{\mathrm{HS}}^2+\lambda a_2\Big\|A \Delta H^{1 / 2}\Big\|_{\mathrm{HS}}^2 \\
\geq & \Big\|\Delta H^{1 / 2}\Big\|_{\mathrm{HS}}^2 .
\end{align*}
The lower bound $\|\Delta H^{1 / 2}\|^2_{\mathrm{HS}}$ is non-negative and zero if and only if $\Delta = 0$. In case $\lambda = 0$, the optimal choice is the zero-matrix.
Due to the construction of the class $\mathcal{V}$, the ODE evolves within $\ker(A)$ and within the complement separately:
\begin{equation*}
    \frac{\partial \Phi_t(x)}{\partial t} = \frac{\sigma_t^{\prime}}{\sigma_t}\Pi \Phi_t(x) + \frac{1}{\sigma_t} B\phi(\Pi \Phi_t(x) ) +  \frac{\sigma_t^{\prime}}{\sigma_t}N \Phi_t(x) + \frac{1}{\sigma_t} D\psi(N \Phi_t(x) ). 
\end{equation*}
Setting
\begin{equation*}
    \Phi_t^{B, D}(x)=\Phi_t^{\|, B}(\Pi x)+\Phi_t^{\perp, D}(N x)
\end{equation*}
where $\Phi_t^{\|, B}$ and $\Phi_t^{\perp, D}$ are the solutions of the ODEs
\begin{align}
    \frac{\partial}{\partial t} \Phi_t^{\|, B}(y)&=\frac{\sigma_t^{\prime}}{\sigma_t} \Phi_t^{\|, B}(y)+\frac{1}{\sigma_t} B \phi(\Phi_t^{\|, B}(y)), & &\Phi_0^{\|, B}(y)=y, y \in \ker(A) \notag \\
    \frac{\partial}{\partial t} \Phi_t^{\perp, D}(r)&=\frac{\sigma_t^{\prime}}{\sigma_t} \Phi_t^{\perp, D}(r)+\frac{1}{\sigma_t} D \psi(\Phi_t^{\perp, D}(r)), & &\Phi_0^{\perp, D}(r)=r, r \in \ker(A)^{\perp}.\label{eq:ODE_normal}
\end{align}
Since for $\lambda = 0$, the optimal choice $\widehat{D}$ is the zero matrix, the solution to the corresponding ODE is $\Phi_t^{\perp, 0}(r) = \sigma_t r$. The solution $\Phi_t^{\|, B}$ does not depend on $\lambda$. Via \eqref{eq:decomp_geometric}, we now conclude
\begin{equation}\label{eq:risk_normal}
    \mathcal{R}_n(\lambda) - \mathcal{R}_n(0) = \mathbb{E}[\|\Phi_1^{\perp, D_\lambda}(N Z)\|^2]-\sigma_{\min }^2 \mathbb{E}\|N Z\|^2.
\end{equation} 
To control the first term on the right-hand side,
we introduce an \emph{ad hoc} rescaling of the ODE
that removes the time dependence arising from
$\sigma_t'/\sigma_t$. 
To this end, we set $g(r) \coloneqq \mathbb{E}_Z[NZ\psi(NZ)]H^{-1} \psi(r)$ and consider a rescaled ODE
\begin{align*}
   \frac{\partial}{\partial t} \Big( \frac{\Phi_t^{\perp, D}(r)}{\sigma_t}\Big)  &= - \frac{\sigma_t^{\prime}}{\sigma_t^2}   \Phi_t^{\perp, D}(r) + \frac{1}{\sigma_t} \frac{\partial}{\partial t}\Phi_t^{\perp, D}(r)\\
   & =  - \frac{1}{\sigma_t}M_{\lambda} g \Big( \frac{\Phi_t^{\perp, D}(r)}{\sigma_t}\Big),
\end{align*} where the second equality follows from inserting the ODE \eqref{eq:ODE_normal} and using that $\psi(ar)=a\psi(r)$ for every $a\geq0$. Then we introduce for $t \in [0,1]$ the time-shift $s(t) = \int_0^{t} \sigma^{- \tau}\dd \tau$ and denote its inverse with $t(s)$, which exists since $\sigma_{\min}^{-1}>0$, accepting an abuse of notation. Then we can set
$V_s =\frac{\Phi_{t(s)}^{\perp, D}(r)}{\sigma_{t(s)}}$ and obtain for $0 \leq s \leq s(1) = \log(\sigma_{\min}^{-1})/(1- \sigma_{\min})$ the ODE
\begin{equation*}
    \frac{\partial}{\partial s} V_s =  \frac{\partial}{\partial t} \Big( \frac{\Phi_t^{\perp, D}(r)}{\sigma_t}\Big)\Big|_{t = t(s)} \frac{\partial t(s)}{\partial s} = - M_{\lambda}g(V_s).
\end{equation*}
Then setting $J_{\sigma} \coloneqq \log(\sigma_{\min}^{-1})/(1- \sigma_{\min})$, yields
\begin{equation}\label{eq:V_phi}
    \Phi_{t(s)}^{\perp, D}(r) = \sigma_{t(s)} V_{s}(r), \quad \Phi_{1}^{\perp, D}(r) = \sigma_{\min} V_{J_{\sigma}}(r).
\end{equation}
Now we want to bound the remainder
\begin{align}
    E_s(r)& \coloneqq V_s(r)-r+\lambda \sigma_{\min} a_1 s A^*A g(r) \notag \\
    & = - \int_0^s M_{\lambda} (g(V_s(r)) - g(r)) \dd r - s (M_{\lambda} - \lambda \sigma_{\min}a_1 A^*A)g(r).\label{eq:remainder}
\end{align}
Using that $\|(I_d+\lambda a_2 A^*A)^{-1}\| \leq 1$, we obtain $\|M_{\lambda}\|\leq \lambda a_1 \sigma_{\min} \|A^*A\|$. We note that $g$ is globally Lipschitz with constant $\operatorname{Lip}(g)$. Then
\begin{equation*}
    \|g(V_s(r)) - g(r)\| \leq \operatorname{Lip}(g) \|V_s(r) -r \|
\end{equation*}
Furthermore, we have that
\begin{equation*}
    V_s(r)  = r - \int_0^s M_\lambda g(V_\tau(r)) \dd \tau
\end{equation*}
and thus
\begin{equation*}
    \|V_s(r)\| \leq \|r\| + \lambda a_1 \sigma_{\min} \|A^*A\| \operatorname{Lip}(g) \int_0^s \|g(V_\tau(r))\|\dd \tau.
\end{equation*}
Grönwalls inequality \cite[Section 1]{Walter1970} yields
\begin{equation*}
    \|V_s(r)\|  \leq \exp(\lambda \sigma_{\min} a_1 \|A^*A\| \operatorname{Lip}(g) s )\|r\|.
\end{equation*}
This implies starting from \eqref{eq:remainder} and setting $C_{\exp} \coloneqq \exp( \sigma_{\min} a_1 \|A^*A\| \operatorname{Lip}(g) s )$ for $\lambda \in [0,1]$ that
\begin{equation*}
    \|V_s(r) -r\| \leq \|M_{\lambda}\| \operatorname{Lip}(g) \int_0^s \|V_\tau(r)\| \dd \tau \leq \|M_{\lambda}\| \operatorname{Lip}(g) J_{\sigma} C_{\exp} \|r\|.
\end{equation*}
Hence
\begin{equation}\label{eq:remainder_first_term}
   \Big\| \int_0^s M_{\lambda} (g(V_\tau(r)) - g(r)) \dd \tau \Big\| \leq \|M_{\lambda}\|^2 \operatorname{Lip}(g)^2 J_{\sigma} C_{\exp} \|r\|.
\end{equation}
For the second term in \eqref{eq:remainder}, we use that
\begin{equation*}
    M_{\lambda} - \lambda \sigma_{\min}a_1 A^*A = - \lambda^2 \sigma_{\min} a_1 a_2 (I_d + \lambda a_2 A^*A)^{-1}A^*A A^*A 
\end{equation*}
which results in
\begin{equation*}
    \|     M_{\lambda} - \lambda \sigma_{\min}a_1 A^*A\| \leq \lambda^2 \sigma_{\min} a_1 a_2 \|A^*A\|^2.
\end{equation*}
Then 
\begin{equation}\label{eq:remainder_second_term}
    s \|(M_{\lambda} - \lambda \sigma_{\min}a_1 A^*A)g(r) \| \leq J_{\sigma} \lambda^2 \sigma_{\min} a_1 a_2 \|A^*A\|^2 \operatorname{Lip}(g) \|r\|.
\end{equation}
Combining \eqref{eq:remainder_first_term} and \eqref{eq:remainder_second_term}, we obtain for the norm of \eqref{eq:remainder}
\begin{equation*}
    \sup_{0 \leq s \leq J_{\sigma}}\|E_s(r)\|\leq \lambda^2 ( a_1^2 \sigma^2_{\min} \|A^*A\|^2 \operatorname{Lip}(g)^2 J_{\sigma} C_{\exp} + J_{\sigma}  \sigma_{\min} a_1 a_2 \|A^*A\|^2 \operatorname{Lip}(g) )\|r\|.
\end{equation*}
Thus, setting $C_{\mathrm{rem}} =  a_1^2 \sigma^2_{\min} \|A^*A\|^2 \operatorname{Lip}(g)^2 J_{\sigma} C_{\exp} + J_{\sigma}  \sigma_{\min} a_1 a_2 \|A^*A\|^2 \operatorname{Lip}(g) $, we have established that 
\begin{equation*}
    V_s(r) = r - \lambda \sigma_{\min} a_1 s A^*A g(r) + E_s(r), \quad \text{with} \quad   \sup_{0 \leq s \leq J_{\sigma}}\|E_s(r)\|\leq C_{\mathrm{rem}} \lambda^2.
\end{equation*}
Using \eqref{eq:V_phi}, we obtain
\begin{align*}
    \mathbb{E}[\|\Phi_1^{\perp, D_\lambda}(N Z)\|^2] & = \sigma_{\min}^2  \mathbb{E}[\|V_{J_0}(NZ)\|^2]\\
    & = \sigma_{\min}^2 \Big(\mathbb{E}[\|NZ\|^2]  - 2 \lambda \sigma_{\min} a_1 J_{\sigma}\mathbb{E}[\langle NZ ,  A^*A g(NZ)\rangle ] - 2 \lambda \mathbb{E}[\langle NZ ,  E_s(NZ)\rangle ]\\
    & \quad +\mathbb{E}[ \|\lambda \sigma_{\min} a_1 J_{\sigma} A^*A g(NZ)  + E_s(NZ)\|^2]\Big).
\end{align*}
Since
\begin{equation*}
    2 \lambda \mathbb{E}[\langle NZ ,  E_s(NZ)\rangle ] \leq 2 \lambda\mathbb{E}[\| NZ\|  \|E_s(NZ)\| ]\leq 2 \lambda^2 C_{\operatorname{rem}} \mathbb{E}[\| NZ\|]
\end{equation*}
and
\begin{align*}
    \mathbb{E}[ \|\lambda \sigma_{\min} a_1 J_{\sigma} A^*A g(NZ) + E_s(NZ)\|^2]&\leq 2\lambda^2  \mathbb{E}[ \| \sigma_{\min} a_1 J_{\sigma} A^*A g(NZ) \|^2] + 2  \mathbb{E}[ \| E_s(NZ)\|^2]\\&\leq 2\lambda^2  \mathbb{E}[ \| \sigma_{\min} a_1 J_{\sigma} A^*A g(NZ) \|^2] + 2 C_{\operatorname{rem}}^2 \lambda^4,
\end{align*}
we conclude that
\begin{equation*}
    \mathbb{E}[\|\Phi_1^{\perp, D_\lambda}(N Z)\|^2] =\sigma_{\min}^2\big( \mathbb{E}[\|NZ\|^2] - 2 \lambda \sigma_{\min} a_1 J_{\sigma } \mathbb{E}[\langle NZ, A^*A g(NZ)] + \mathcal{O}(\lambda^2)\big).
\end{equation*}
Consequently, for \eqref{eq:risk_normal}
\begin{equation*}
    \mathcal{R}_n(\lambda) - \mathcal{R}_n(0) = -2 \lambda \sigma_{\min}^3 a_1 J_{\sigma } \mathbb{E}[\langle NZ, A^*A g(NZ)] + \mathcal{O}(\lambda^2).
\end{equation*}
The constant $c = 2  \sigma_{\min}^3 a_1 J_{\sigma } \mathbb{E}[\langle NZ, A^*A g(NZ)]$ is positive, since
\begin{align*}
    \mathbb{E}[\langle NZ, A^*A g(NZ)]& = \mathbb{E}_Z[\langle A N Z, A \mathbb{E}_Z\left[N Z \psi(N Z)^*\right] H^{-1} \psi(N Z)\rangle]\\
    & = \operatorname{Tr}\big(A \mathbb{E}_Z[N Z \psi(N Z)^*] H^{-1} \mathbb{E}_Z\left[\psi(N Z)(N Z)^*\right] A^*\big)\\
    & = \big\|A \mathbb{E}_Z[N Z \psi(N Z)^*] H^{-1 / 2}\big\|_{\mathrm{HS}}^2.
\end{align*}
Since $\mathbb{E}_Z[N Z \psi(N Z)^*]$ maps into $\ker(A)^{\perp}$ and $A$ is injective on $\ker(A)^{\perp},$ \Cref{ass:last_layer_optimization} grants that $c >0$.

\end{proof}

\section*{Miscellaneous}
For those who did not pay attention in their Latin lesson or just never attended one, we include a translation of the Latin expressions used in our proofs. The English definitions are based on the \href{https://dictionary.cambridge.org/}{Cambridge dictionary}.
\begin{table}[h]
    \centering
    \begin{tabular}{l|p{10cm}}
        Latin & English definition \\\hline 
               \textit{ad hoc} & made or happening only for a particular purpose or need, not planned before it happens \\
                \textit{a fortiori}  &for an even stronger reason than one that has already been accepted \\
                       \textit{in toto}  & as a total or whole \\
       \textit{ipso facto} & used to say that it is reasonable to state [...] something based on facts that are already known\\
       \textit{mutatis mutandis} & used when comparing two [...] things to say that although changes will be necessary in order to take account of different situations, the basic point remains the same.
    \end{tabular}
\end{table}

\bibliography{literature}

\end{document}